\newtheorem{theorem}{Theorem}
\newtheorem{lemma}{Lemma}
\newtheorem{example}{Example}
\newtheorem{corollary}{Corollary}
\newtheorem{claim}{Claim}
\newtheorem{proposition}{Proposition}
\newtheorem{assumption}{Assumption}
\newtheorem{definition}{Definition}
\newenvironment{proof}{\par \smallskip{\bf Proof:}}{\hfill\stopproof}
\def\stopproof{\square}
\def\square{\vbox{\hrule height.2pt\hbox{\vrule width.2pt height5pt \kern5pt
\vrule width.2pt} \hrule height.2pt}}
\begin{document}

\title{ On the Benefit of Width for Neural Networks: Disappearance of Basins}

\author{
{Dawei Li} \thanks{Coordinated Science Laboratory, Department of ISE, University of Illinois at Urbana-Champaign, Urbana, IL. \texttt{dawei2@illinois.edu}.}
{\quad \quad \quad Tian Ding}\thanks{Department of Information Engineering, The Chinese University of Hong Kong, Hong Kong. \texttt{tianding@link.cuhk.edu.hk}.}
{\quad \quad  \quad Ruoyu Sun} \thanks{Coordinated Science Laboratory, Department of ISE, University of Illinois at Urbana-Champaign, Urbana, IL. \texttt{ruoyus@illinois.edu}.} \
\date{\today}
}

\newcommand{\fix}{\marginpar{FIX}}
\newcommand{\new}{\marginpar{NEW}}

\maketitle

\begin{abstract}
Wide networks are often believed to have a nice optimization landscape, but what rigorous results can we prove? To understand the benefit of width, it is important to identify the difference
between wide and narrow networks. In this work, we prove that from narrow to wide networks, there is a phase transition from having sub-optimal basins to no sub-optimal basins. Specifically, we prove two results: on the positive side, for any continuous activation functions, the loss surface of a class of wide networks has no sub-optimal basins, where ``basin'' is defined as the set-wise strict local minimum; on the negative side, for a large class of networks with width below a threshold, we construct strict local minima that are not global. These two results together show the phase transition from narrow to wide networks. 
\end{abstract}

\section{Introduction}
Current neural network models like ResNet \cite{he2016deep}
contain a very large number of parameters.
The large size is mainly because these networks are deep
(with many layers) and wide (with many neurons in one layer).
If we reduce the depth or the width, the performance of the models clearly deteriorates. The benefit of depth is believed to be stronger representation power (e.g. \cite{telgarsky2015representation}), but what is the benefit of width? 

One possible benefit is also the stronger representation power: perhaps a narrow network cannot fit the training data well, and thus performs worse than a wide network.
However, the findings of network pruning (e.g. \cite{han2015deep}) suggest 
that the disadvantage of narrow networks is largely due
to optimization issues. More specifically, the global optimal value of a small network problem can be quite good, but finding the global optimum by an optimization algorithm is hard.

The next question is: how does width help optimization? There is a recent trend of analyzing optimization algorithms for ultra-wide neural networks
\cite{jacot2018neural,allen2018convergence,zou2018stochastic,du2018gradient,mei2018mean,sirignano2018mean,rotskoff2018neural,sirignano2019mean,arajo2019meanfield,nguyen2019mean,dou2020training,wei2019regularization,fang2021modeling,chen2020generalized}. However, these results all require a much larger width than practice (e.g. infinite width \cite{jacot2018neural} or $O(N^{10})$ neurons \cite{du2018gradient} where $N$ is the number of samples). Further, these works mainly try to show  ``wide networks are good'', but not ``wide networks are better than narrow ones''. Therefore, they do not identify the specific benefit of width. 

As for the specific benefit, one popular intuition is that more parameters can ``smooth the landscape'' \cite{livni2014computational,lopez2018easing}.
 But what exactly does ``smoother landscape'' mean? 
One conjecture is that wide networks
have no sub-optimal local minima\footnote{In this paper, ``sub-optimal local minimum/basin'' or ``bad local minimum/basin'' refers to a local minimum/basin that is not global. Throughout this paper, we use these two terms interchangeably.} (e.g. \cite{lopez2018easing,nguyen2017loss2}).
This conjecture was stated as a theorem 
in a classical paper \cite{yu1995local}.
More specifically, \cite[Theorem 3]{yu1995local} claimed that for a 1-hidden layer network with more neurons than samples, no sub-optimal local minimum exists.
This result only requires very weak assumptions, and thus it seems to be strong evidence for the conjecture\footnote{However, a recent work \cite{ding2019spurious} disproved this result.}.
However, for decades none of the existing works have successfully extended the result of \cite{yu1995local} to deep networks. So far, there is no evidence that width can eliminate bad local minima for a general ``deep'' network.

In this work, we aim at understanding the specific benefit of width. From a scientific perspective, this benefit should satisfy two requirements. First, we hope to identify a property that holds for wide networks but not for narrow networks. 
Second, the theory should have minor assumptions on other aspects except for width, such as depth, activation function, and training data. In short, we endeavor to explore the {true benefit of width.}

\subsection{Main Contributions}
We first explain some geometrical concepts. 
A function is called  a ``global function'' if no sub-optimal local minimum exists. A function is called a ``weakly global function'' if it admits no sub-optimal set-wise strict local minima (also referred to as sub-optimal basins), defined in Definition \ref{def_2}; a graphical illustration is provided in Figure \ref{fig:weakglo}(a). Intuitively, weakly global functions may have flat regions of local minima (Figure \ref{fig:weakglo}(a)), but do not have bounded spurious regions surrounded by barriers (Figure \ref{fig:weakglo}(b)).

\begin{figure}
    \centering
    \subfigure[No bad basin]{\label{fig:weakglo:side:a}
    \includegraphics[width=1.8in, height=2.5cm]{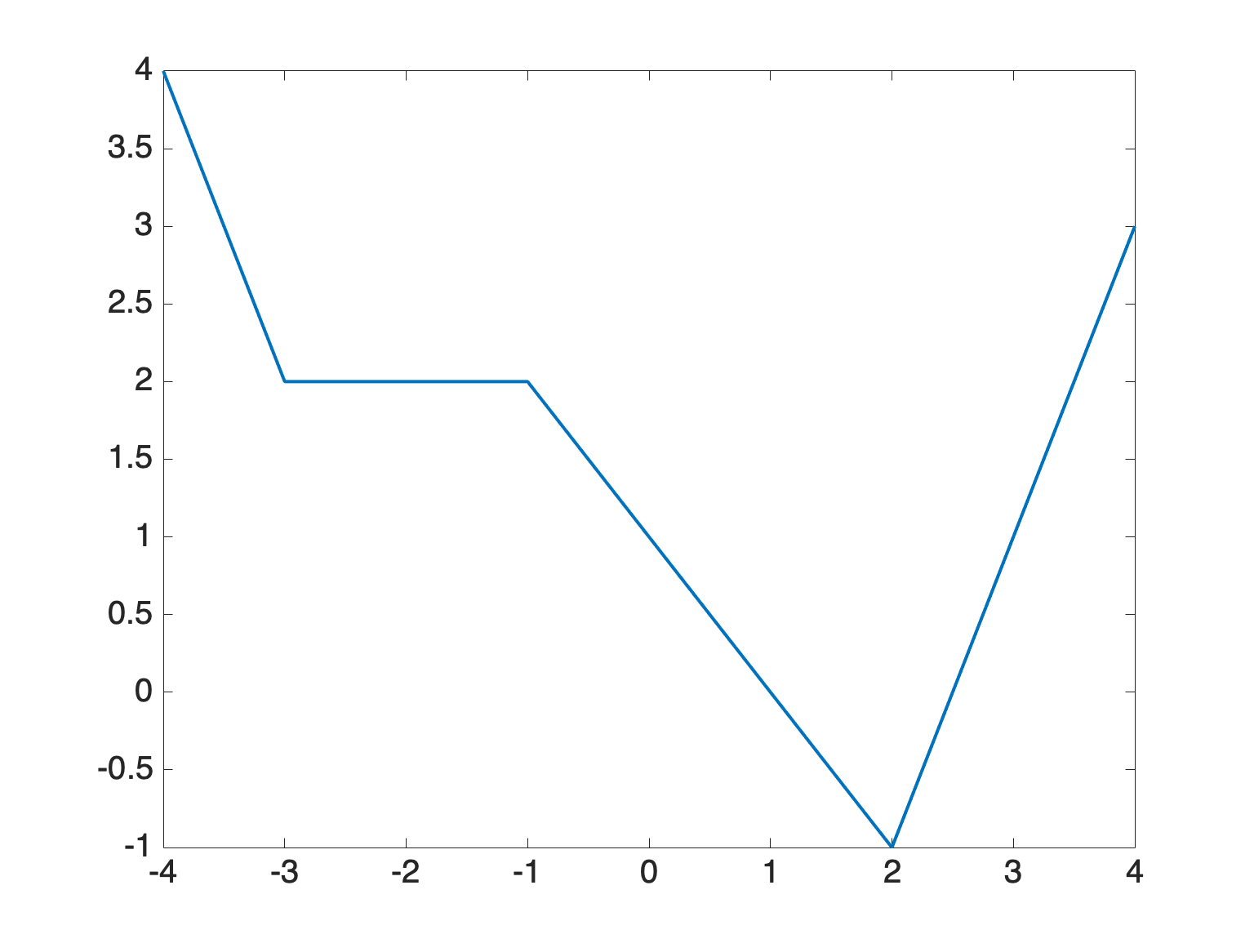}}
    \quad
    \subfigure[Has bad basin]{\label{fig:weakglo:side:b}
    \includegraphics[width=1.8in,height=2.5cm]{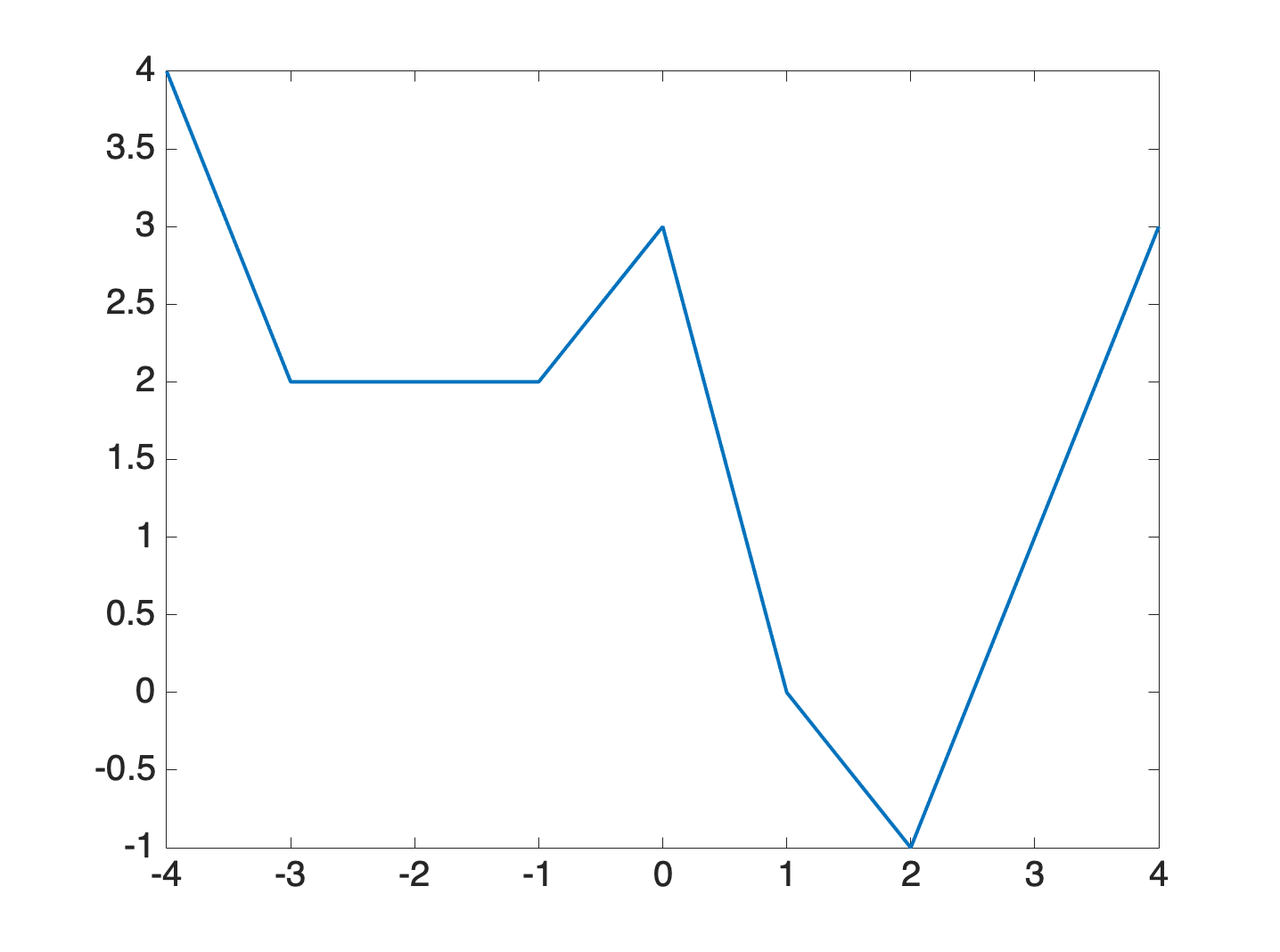}}
    \caption{An example of a weakly global function \subref{fig:weakglo:side:a} and a non-weakly-global function \subref{fig:weakglo:side:b}. Both functions have bad non-strict local minima, consisting of a plateau of $(-3, -1)$. The plateau in \subref{fig:weakglo:side:b} is surrounded by ``barriers'', entailing a sub-optimal set-wise strict local minimum in the sense of sets.}
    \label{fig:weakglo}
\end{figure}

Our first major result roughly states that for any continuous activation function, the loss function of wide networks is weakly global. Our second major result states that there exist some narrow networks that can have bad basins, i.e, are not weakly global. 
Therefore, from narrow to wide networks, there is a phase transition from ``having bad basins'' to ``no bad basin''. 

We provide a brief summary of our main results as follows.
\begin{itemize}
\item We study a geometrical property called ``Property PT'': at any point, after a small perturbation, there is a strictly decreasing path to a global minimum. We point out that Property PT implies ``weakly global''.
 
\item 
We prove that for a fully connected network with any number of layers, where the last hidden layer has at least $N$ neurons ($N$ is the number of data samples), and for any continuous activations, the loss function is weakly global (Theorem \ref{thm::continuous}). This is obtained by approximating the loss function by a class of neural networks with Property PT.

\item As an extension of the first result, we prove that
when there is one wide layer (not necessarily the last hidden layer) with at least $N$ neurons \footnote{This setting of network structure is more general than the first setting, and is quite close to practice: for instance, ImageNet dataset in ILSVRC competition has around 1.2 million training samples, while VGG and Inception network have 3 and 1.3 million neurons in the widest layer respectively. }, under extra assumptions on the structure and activation functions, the empirical loss is a weakly global function (Theorem \ref{thm::continuous-pyramid}).

\item We prove that 
for a large class of activations,
there exist 1-hidden-layer neural networks with less than $N$ neurons that have bad strict local minima (Theorem \ref{thm::bad_basin_exists}). This is in sharp contrast to the first result that
no bad basin exists if there are at least $N$ neurons. 
\end{itemize}

\subsection{Paper Organization and Notation}
This paper is organized as follows. 
We first review some related works in Section \ref{sec: related works}.
In Section~\ref{sec::main-results}, we present our main results. An overview of the main proof ideas is provided in Section~\ref{sec::proof-idea}. 
The formal proofs of the positive and negative results are respectively presented in Section \ref{sec::proof_positive} and \ref{sec::proof_negative}. Conclusions are presented in Section~\ref{sec::conclusions}.

Throughout this paper, we use lowercase letters (e.g., $a$), bold lowercase letters (e.g., $\mathbf{a}$), and uppercase letters (e.g., $A$) to denote scalars, column vectors, and matrices, respectively. $\mathbf{0}$ and $\mathbf{1}$ denote vectors or matrices with all-zero and all-one entries, respectively. For a column vector $\mathbf{a} \in \mathbb{R}^d$, we denote its $i$-th entry by $a_i$. For a matrix $A \in \mathbb{R}^{d_1 \times d_2}$, the entry in the $i$-th row and the $j$-th column is denoted by $(A)_{i,j}$ or $a_{i,j}$. We use $\circ$ to represent the Hadamard product. $\mathbb{N}$ denotes the set of all non-negative integers, and $\mathbb{N}_+$ denotes the set of all positive integers.

\section{Related Works and Discussions}\label{sec: related works}
$\quad \text{}$

\textbf{Practical motivation and implication}.
While our paper is mainly theoretical and has
a clear theoretical motivation, it is also strongly motivated by practical needs. Recently, there is an increasing demand for
training small neural networks, which is known
to be much more difficult than training large networks. 
Nevertheless, it is still not clear what makes the difference between small and big networks. 
Our work can be viewed as one step along this direction. 
In particular, our work suggests that
the training difficulty of small networks 
is (at least partially) due to bad basins, 
and wide networks do not suffer from bad basins. 
A recent work \cite{evci2019difficulty} 
empirically showed that for training small networks,
there are barriers between  two solutions obtained by different training methods, which can be viewed as empirical evidence that basins exist in small networks. See more discussions in Appendix \ref{appen::discussions}.

\textbf{Theoretical impact}.
Our work proves a nice geometrical property for overparameterized deep neural networks. Although it is established for supervised learning, the proof technique can be extended to other cases. For instance, a recent work for generative adversarial networks (GAN) \cite{sun2020towards} proved that the RpGAN loss has no bad basin in the function space, and then combined the function space result with the analysis of the current paper to show that the RpGAN loss in the parameter space also has no bad basin. This example demonstrates that the current work can serve as a foundation for landscape analysis of other problems that involve neural networks. 

\textbf{Results on shallow networks.}
The optimization landscape for neural network training problems has been extensively studied in recent years  \cite{ge2017learning,andoni2014learning,sedghi2014provable,janzamin2015beating,haeffele2015global,gautier2016globally,brutzkus2017globally,soltanolkotabi2017learning,soudry2017exponentially,goel2017learning,boob2017theoretical,du2017convolutional,zhong2017recovery,li2017convergence,liang2018understanding,du2018power,mei2018mean,sirignano2018mean,safran2017spurious,wang2019learning,chizat2018global,li2018learning, de2020sparsity, lacotte2020all, arjevani2021analytic, tian2017analytical}. These works only study one-hidden-layer networks, and many have strong assumptions on data distributions or the activation functions. In addition, none of them tries to differentiate narrow and wide networks.

\textbf{Results on deep linear networks.}
There are a few works showing that
for deep linear networks no sub-optimal local-min exists \cite{baldi1989neural,kawaguchi2016deep,freeman2016topology,hardt2016identity,yun2017global,laurent2018deep}.
However, linear neurons are not used in practice.
In addition, these works do not show the effect
of width on the landscape, as all linear networks
are equally good. 

\textbf{Algorithmic analysis.}

Some recent works  \cite{jacot2018neural,du2018gradientb,allen2018convergence,zou2018stochastic,liu2020loss,nguyen2021proof, huang2020dynamics, oymak2020toward} analyze the convergence of gradient descent (GD) to global minima. Their conclusions are stronger than our landscape analysis since they can prove linear convergence, but their assumptions are also stronger. In particular, these works require a large number of neurons at each layer, e.g. $O(N^{2})$ neurons, where $N$ is the number of samples. In practical neural networks such as VGG and ResNet, the number of neurons is in the order of $O(N)$, and even $O(N^2)$ neurons are too many to use in practice.
Recently, some works started to study the first-order methods for mildly over-parameterized neural networks. 
\cite{ji2019}\cite{chen2019much} showed that the width can be smaller than $O(N)$ if the dataset is separable with a large enough margin. However, if the dataset contains points that are close to each other with different labels then the number of parameters required could be very large or infinity. Note that all these works on algorithmic analysis only analyzed wide networks, and did not intend to show the difference between narrow and wide networks. 

\textbf{Landscape analysis of mildly wide deep networks.}
A few works \cite{nguyen2017loss1,nguyen2017loss2,shamir2018resnets,nguyen2018loss2} also studied mildly wide neural networks from the perspective of loss landscape, but they either only showed a weaker conclusion that a subset of local minima are global minima, or require stronger assumptions that the activation functions need to be smooth and strictly increasing (thus do not cover popular activation functions like ReLU, leaky ReLU and Swish \cite{ramachandran2018searching}, as discussed later). \cite{cooper2021global} also characterizes the geometric properties of critical points of wide neural networks.

We note that a recent work \cite{venturi2018spurious} provided a similar result to ours for general activation functions. However, it only studied 1-hidden-layer networks, while we consider an arbitrary network depth. Further, we believe that it is not easy to extend the proof technique in \cite{venturi2018spurious} to the multi-layer case.

\textbf{Various activation functions.}
Most existing theoretical works only analyze one activation function or one special class of activation functions (with the exception 
\cite{venturi2018spurious} which is limited to 1-hidden-layer networks).
To show the diversity of the activation functions considered in the literature, below we list some popular ones. 
(a) Sigmoid activation $\text{sigmoid}(t) = 1/(1 + e^{-t})$ and tanh activation  
$\text{tanh}(t) = (1 - e^{-2t})/(1 + e^{-2t})$.
They are strictly increasing but non-convex. 
(b)  ReLU activation 
$ \text{ReLU}(t) = \max \{ t , 0 \}.$ It is non-decreasing and convex, but non-smooth and not strictly increasing. 
(c) Leaky ReLU. It is strictly increasing but non-smooth.
(d) Polynomial activations (studied in, e.g.,
\cite{kileel2019expressive, soltanolkotabi2018theoretical, du2018power}). They are smooth but not monotone. In particular, a number of works \cite{soltanolkotabi2018theoretical,du2018power} consider the quadratic activation.
(e) The Swish activation $\text{swish}(t) = t \cdot  \text{sigmoid}(t)$,
which is recommended by \cite{ramachandran2018searching} as one of
the best choices for image classification after searching for a large class of functions. It is smooth and non-monotone. 
(f) GeLU activation \cite{hendrycks2016gaussian} which has been increasingly popular recently in natural language processing (e.g., \cite{liu2019roberta}). It is smooth and non-monotone. 
(g) sine activation \cite{sitzmann2020implicit}
which was found to be much better than ReLU for solving problems involving physical signals. It is smooth and non-monotone. 
We notice that these activation functions exhibit quite different properties, and we are not aware of any previous result (other than the very recent work \cite{venturi2018spurious} discussed before) on the landscape study that can cover even a half of them. In contrast, our positive result Theorem \ref{thm::continuous} applies to all the activation functions mentioned above.

\textbf{Generalization error}.
Finally, landscape analysis is just one part 
of machine learning theory, which includes representation, optimization and generalization. In terms of the generalization error, many recent works \cite{neyshabur2017exploring,bartlett2017spectrally,poggio2018theory,belkin2018reconciling,mei2019generalization} try to understand why over-parameterization does not cause overfitting. This is a line of research that is orthogonal to optimization studies. 

\section{Settings and Main Theorems}\label{sec::main-results}
In this section, we present our problem settings and main results. There are two sets of results: the positive results on the absence of sub-optimal basins for wide networks, and the negative results on the existence of sub-optimal basins for narrow networks. The formal proofs of the positive and negative results will be presented in Section \ref{sec::proof_positive} and Section \ref{sec::proof_negative}, respectively.

\subsection{Positive Results}
Consider a fully connected neural network with $H$ hidden layers. For an input sample $\mathbf{x}\in\mathbb{R}^{d_0}$, the network output is denoted by
\begin{equation}
\mathbf{t}_{H+1}(\mathbf{x})=W_{H+1}\sigma_H(W_H\cdots W_2\sigma_1(W_1\mathbf{x}))\in\mathbb{R}^{d_{H+1}}
\end{equation}
where for the $h$-th hidden layer, $d_h$ is the number of neurons, $W_h\in\mathbb{R}^{d_{h}\times d_{h-1}}$ is the weight matrix, and $\sigma_{h}$ is the activation function.
Assume that there are $N$ training samples. Let $X \triangleq [\mathbf{x}^{(1)}, \cdots, \mathbf{x}^{(N)}]\in\mathbb{R}^{d_0\times N}$ and $Y \triangleq [\mathbf{y}^{(1)},  \cdots, \mathbf{y}^{(N)}]\in\mathbb{R}^{d_{H+1}\times N}$
be the input and output data matrices. We can thus represent the network output as a matrix $T_{H+1} \in \mathbb{R}^{d_{H+1}\times N}$, given by the following recursive form:
\begin{subequations}
\label{eq 1}
\begin{align}
&T_1  = \sigma_1(W_1X),\\
&T_h  =\sigma_h(W_hT_{h-1}), \quad h=2, 3, \cdots, H,\\
&T_{H+1}  = W_{H+1}T_H
\end{align}
\end{subequations}
where $T_h\in\mathbb{R}^{d_h\times N}$ denotes the output of the $h$-th hidden layer.\footnote{In this paper, we do not consider bias in the network model to simplify the analysis. In fact, adding bias is equivalent to adding an all-one row to the input matrix of each layer. This does not affect the assumptions and thus does not change the analysis.}

Let $W=(W_1, \cdots, W_{H+1})$ denote all the weights. Define the empirical loss as
\begin{equation}\label{equation: loss function}
E(W) =l(Y, T_{H+1})=l(Y, W_{H+1}T_H)
\end{equation}
where $l$ is the loss function. Then, the training problem of the considered network is to find a $W$ to minimize the empirical loss $E(W)$.

By the positive results, we aim to characterize the benefit of network width. To establish our first theorem, we introduce an important landscape property termed ``Property PT''.
\begin{definition}
\label{def::path_to_globalinf}
For a continuous function $f:\mathbb{R}^m \rightarrow \mathbb{R}$, a (strictly) decreasing path from a given point $\mathbf{p}_0 \in \mathbb{R}^m$ to a global infimum is defined to be any continuous mapping $g:[0,1) \rightarrow \mathbb{R}^m$ such that $g(0) = \mathbf{p}_0$, $\lim_{t\rightarrow 1} f(g(t)) = \inf_{\mathbf{p} \in \mathbb{R}^m}f(\mathbf{p})$, and that $f(g(t))$ is (strictly) decreasing for $t\in[0,1)$.
\end{definition}
\begin{definition}
	We say a function $f:\mathbb{R}^m\rightarrow\mathbb{R}$ satisfies Property PT if starting from any point $\mathbf{p}\in\mathbb{R}^m$ that is not a global minimum, there exists an arbitrarily small perturbation such that from the perturbed point $\hat{\mathbf{p}}$, there exists a strictly decreasing path to the global infimum of $f$.
\end{definition}
Roughly speaking, Property PT identifies a class of loss surfaces on which the global infimum is ``approachable'' by a strictly decreasing path from \textit{almost} everywhere (but not everywhere). Note that a function with Property PT may still possess sub-optimal local minima. For example, the function $f(u, v)=(1-uv^2)^2$ has a non-strict local minimum $(-1, 0)$, since any point $(u', v')$ in a sufficiently small neighborhood of $(-1, 0)$ satisfies $uv^2\leq0$, leading to $f(u',v')>f(-1,0)=1$. Further, this local minimum is sub-optimal as $f(1,1) = 0<f(-1,0)$. However, $f$ satisfies Property PT: starting from any initial point $(u, v)$ that is not a global minimum, there exists an arbitrarily small perturbation $(u', v')$ that satisfies $v'\neq 0$; then we can construct a strictly decreasing path by connecting $(u', v')$ and a global minimum $(1/(v')^2, v')$.

Then, we specify several assumptions on the training dataset, the loss functions, the width, and the activation functions.
\begin{assumption}
\label{ass::overall}
\quad\par
	\begin{enumerate}[label=A\arabic*]
	    \item\label{ass::data} $\mathbf{x}^{(n)} \not= \mathbf{0}$ and $\mathbf{x}^{(n)}\neq \mathbf{x}^{(n')}$, for all $n, n' = 1,2,\cdots, N$ with $n \not= n'$;
		\item\label{ass::loss} The loss function $l(Y, \hat{Y})$ is convex with respect to $\hat{Y}$;
		\item\label{ass::overpar} The width of the last hidden layer satisfies $d_H\geq N$;
		\item\label{ass::neuron} The activation function $\sigma_h$ is continuous for all $h = 1,\cdots, H$.
	\end{enumerate}	
\end{assumption}
Assumption \ref{ass::data} requires the input data samples to be non-zero and to be distinguished from each other. Assumption \ref{ass::loss} holds for almost all commonly-used loss functions including quadratic loss, hinge loss, etc. Assumption \ref{ass::overpar} is the
assumption of ``wide network'', i.e.,
the last hidden layer has no fewer neurons than
the number of samples.
 There is no assumption on the width of other hidden layers.
Some works call this assumption ``over-parameterization''.  Assumption \ref{ass::neuron} is very mild as it only requires the activations to be continuous.
\begin{assumption}[Special Activation Functions]
\label{ass::activation}
For each $1 \leq h \leq H$, the activation function $\sigma_h$ is analytic, and there exist $N$ distinct non-negative integers $l_0, l_1, \cdots, l_{N-1} \in \mathbb{N}$, which form an arithmetic sequence, such that 
\begin{equation}
\sigma_h^{(l_0)}(0), \sigma_h^{(l_1)}(0), \cdots, \sigma_h^{(l_{N-1})}(0) \not = 0
\end{equation}
where $\sigma_h^{(l)}(0)$ denotes the $l$-th order derivative of $\sigma_h$ at zero.
\end{assumption}
Assumption \ref{ass::activation} identifies a class of analytic functions that has $N$ non-vanishing derivatives at zero, whose orders form an arithmetic sequence.\footnote{In the first version of this paper, we require that the first $N$ orders of derivatives, i.e., $\sigma_h(0), \sigma_h'(0),\cdots, \sigma_h^{(N-1)}(0)$, are non-zero. This is a special case of Assumption \ref{ass::activation}.} This includes many commonly-used activation functions such as sigmoid and softplus, but it does not cover ReLU since it requires smoothness (as mentioned before, ReLU can be covered by using the approximation trick). Now, we present our first main theorem, which states that for generic data and the above special class of activation functions, the empirical loss of a wide neural network has Property PT.
\begin{theorem}\label{thm::deep-convex-analytic}
Suppose that a fully connected network satisfies Assumption \ref{ass::overall} and Assumption \ref{ass::activation}. Then, the empirical loss function $E(W)$ defined in \eqref{equation: loss function} satisfies Property PT.
\end{theorem}

In what follows, we establish positive results showing that no sub-optimal basins exist for wide networks. Here, we use the term ``basin'' to denote ``set-wise strict local minimum'' (which is compact by definition), a notion borrowed from \cite{josz2018theory} and formally presented below.
\begin{definition}[Set-wise strict local minimum, ``basin'']
\label{def_1}
We say that a compact subset $X \subseteq S$ is a strict local minimum of $f:S \rightarrow \mathbb{R}$ in the sense of sets if there exists $\varepsilon>0$ such that for all $x\in X$ and all $y\in S\setminus X$ satisfying $\min_{x'\in X}\|y-x'\|_2\leq\varepsilon$, it holds that $f(x)< f(y)$.
\end{definition}
Definition \ref{def_1} generalizes the notion of strict local minimum from the sense of points to the sense of sets. Note that any point-wise strict local minimum must be a set-wise strict local minimum, but not vice versa.
For example, the plateau of $[-3, -1]$ in Figure \ref{fig:weakglo}\subref{fig:weakglo:side:b} is a set-wise (but not point-wise) strict local minimum.

\begin{definition}[Weakly global function]
\label{def_2}
We say that $f:S\rightarrow\mathbb{R}$ is a weakly global function if it is continuous and every set-wise strict local minimum contains a (point-wise) global minimum of $f$. 
\end{definition}
Definition \ref{def_2} identifies an important class of continuous functions, termed weakly global functions, which admit no bad strict local minima in the sense of sets, and hence no sub-optimal basins. We note that although weakly global functions do not have sub-optimal set-wise strict local minima, they may still possess sub-optimal infimum stretching to infinity. For example\footnote{This example is provided by an anonymous reviewer of the submitted version. We sincerely thank this reviewer for the contribution.}, consider $f:\mathbb{R}\rightarrow \mathbb{R}$, $f(t)=\min(1+e^t,(1-t)^2)$. This is a weakly global function, but the function value decreasingly converges to 1 as $t\rightarrow -\infty$, which is greater than the global minimal value $f(1) = 0$.

It can be shown that Property PT implies the absence of sub-optimal basins.
\begin{proposition}\label{prop::PT_weaklyglobal}
A function $f:\mathbb{R}^m\rightarrow\mathbb{R}$ is weakly global if it has Property PT.
\end{proposition}
By Proposition \ref{prop::PT_weaklyglobal}, we immediately obtain that the empirical loss function in Theorem \ref{thm::deep-convex-analytic} is also weakly global.
\begin{corollary}\label{cor::deep-convex-analytic}
Suppose that a fully connected network satisfies Assumption \ref{ass::overall} and Assumption \ref{ass::activation}. Then, the empirical loss function $E(W)$ defined in \eqref{equation: loss function} is a weakly global function.
\end{corollary}
Then, we present our second main theorem, which extends Corollary \ref{cor::deep-convex-analytic} to all continuous activation functions.
\begin{theorem}\label{thm::continuous}
Suppose that a fully connected network satisfies Assumption \ref{ass::overall}. Then, the empirical loss function $E(W)$ defined in \eqref{equation: loss function} is a weakly global function. In other words, the loss function has no sub-optimal set-wise strict local minima.
\end{theorem}
Since a (point-wise) strict local minimum is a special case of a set-wise strict local minimum, we immediately have the following corollary. 
\begin{corollary}
\label{cor::continuous}
The empirical loss function for a fully connected neural network satisfying Assumption \ref{ass::overall}  has
no sub-optimal strict local minima in the sense of points. 
\end{corollary}

The notion of sub-optimal basin is closely related to the term ``spurious valley'', defined as a connected component of a sub-level set, within which the loss value cannot be made arbitrarily close to the global infimum. This is a more general notion because a spurious valley may be unbounded, while a sub-optimal basin is always a compact set. An interesting line of works \cite{venturi2018spurious,nguyen2018loss2,nguyen2019connected} has shown that under certain assumptions, no spurious valleys exist for wide neural networks. For example, \cite{venturi2018spurious} proved that no spurious valleys exist for 1-hidden-layer network with ``low intrinsic dimension''. \cite{nguyen2019connected} further proved that there are no spurious valleys for wide multi-layer neural networks with a class of strictly monotonic activations.

In this paper, we study sub-optimal set-wise strict local minima rather than spurious valleys. On one hand, our property is weaker since no spurious valleys implies no sub-optimal basins but the reverse is not true \cite{sun2020global}. On the other hand, this weaker property has two benefits: i) can cover a broader class of activation functions; ii) makes it easier to establish phase transition. More specifically, the existing results on ``no spurious valleys'' for deep networks post strong conditions on activation functions, such as strict monotonicity \cite{nguyen2019connected}, while our landscape property holds for all continuous activations.
Because of this breadth, we are able to build phase transitions for many activation functions (not only strictly monotone ones).

In Theorem \ref{thm::continuous}, the last hidden layer of the network needs to be sufficiently wide. Next, we show that if a pyramid structure is added after the ``wide layer'', the resulting loss surface still admits no sub-optimal basins. We specify such pyramid structure by the the following assumption. 
\begin{assumption}
\label{ass::pyramid-overall}
There exists $1 \leq h_0 \leq H$ such that
\begin{enumerate}[label=B\arabic*]
	\item\label{ass::pyramid-structure} $d_{h_0}\geq N$, $d_{h_0}\geq d_{h_0+1}\geq\cdots\geq d_{H+1}$;
    	\item\label{ass::pyramid-neuron} For all $h_0+1 \leq h \leq H$, the activation function $\sigma_h$ is non-increasing or non-decreasing over $\mathbb{R}$. 
\end{enumerate}	
\end{assumption}
It can be shown that our ``no sub-optimal basins'' result for over-parameterized networks still holds after adding the pyramid architecture:

\begin{theorem}\label{thm::continuous-pyramid}
Suppose that a fully connected neural network satisfies Assumption \ref{ass::data}, \ref{ass::loss}, \ref{ass::neuron}, and Assumption \ref{ass::pyramid-overall}. Then, the empirical loss function $E(W)$ defined in \eqref{equation: loss function} is a weakly global function. In other words, the loss function has no sub-optimal set-wise strict local minima.
\end{theorem}

\subsection{Negative Results}
The negative results mainly state that without the assumption on the width, the nice property of ``no sub-optimal basins'' may not hold. In this part, we consider a 1-hidden-layer network with unit input dimension and unit output dimension. Therefore, we adopt a simpler notation system for convenience of presentation. Let $m$ denote the width of the hidden layer, and $\mathbf{x} = (x^{(1)}, \cdots, x^{(N)})^\top \in \mathbb{R}^{N}$
, $\mathbf{y} = (y^{(1)}, \cdots, y^{(N)})^\top \in \mathbb{R}^{N}$ be the vectors collecting all input and output data samples, respectively. The network output is represented by $\hat{\mathbf{y}}^\top=\mathbf{v}^\top \sigma(\mathbf{wx}^\top)$ where $\mathbf{v} = (v_1, \cdots, v_{m})^\top\in \mathbb{R}^{m}$ and $\mathbf{w}=(w_1, \cdots, w_{m})^\top \in \mathbb{R}^{m}$ are the weights to the output layer and the hidden layer, respectively. We adopt the quadratic loss function $l(\mathbf{y}, \hat{\mathbf{y}})=\|\mathbf{y}-\hat{\mathbf{y}}\|_2^2$, and thus the empirical loss is given by
\begin{equation}
E(\mathbf{v}, \mathbf{w})=\|\mathbf{y} - \hat{\mathbf{y}}\|^2_2 = \sum_{n=1}^N\left[y^{(n)}-\mathbf{v}^\top \sigma(\mathbf{w}\cdot x^{(n)})\right]^2.
\end{equation}

The first negative result shows that under certain conditions, ``narrow'' ($m<N$) networks have sub-optimal basins.
\begin{theorem}
\label{thm::bad_basin_exists}
Consider a data vector $\mathbf{x}\in \mathbb{R}^N$, $N>1$. Suppose that $\mathbf{x}$ has no zero entries and no identical entries, i.e., $x^{(i)}\not= 0$ and $x^{(i)}\not= x^{(i')}$ for all $1\leq i,i'\leq N$, $i \not= i'$. Then, for any $m <N$, there exist output data $\mathbf{y}\in\mathbb{R}^N$ and a 1-hidden-layer network with $m$ neurons, such that the empirical loss has a sub-optimal basin.
\end{theorem}
Theorem \ref{thm::bad_basin_exists} shows that if the network width is smaller than $N$, for generic input data $\mathbf{x}$, we can always find a neural network such that a bad basin exists. Together with Theorem \ref{thm::continuous}, we characterize a phase transition from existence to non-existence of sub-optimal basins. However, given a specific activation function, this phase transition may not exist. In fact, it can be shown that for positive homogeneous activation functions (e.g., ReLU), there is always no set-wise strict local minima regardless of the network width.\footnote{Theorem \ref{thm::homogeneous_no_bad_basin} is presented by an anonymous reviewer of the submitted version. We sincerely thank this reviewer for the contribution.}
\begin{theorem}
\label{thm::homogeneous_no_bad_basin}
Consider a 1-hidden-layer neural network with activation function satisfying $\sigma(\alpha t)=\alpha \sigma(t)$ for any $t\in\mathbb{R}$ and $\alpha > 0$. For any width $m$ and any data $(\mathbf{x}, \mathbf{y})$, the empirical loss has no set-wise strict local minima.
\end{theorem}

Naturally, one may wonder what type of activation functions admit a phase transition on the existence of sub-optimal basins. The following theorem characterizes such a class of activation functions.
\begin{theorem}
\label{thm::bad_basin_smooth}
Consider a 1-hidden-layer neural network with $N\geq 3$, $m=1$ (single neuron). The activation function $\sigma$ is twice continuously differentiable with  $\sigma(0),\sigma'(0),\sigma''(0)\not=0$. Suppose that the input data $\mathbf{x}$ has no zero entries and no identical entries. Then, there exists output data $\mathbf{y}\in\mathbb{R}^N$ such that the empirical loss has a sub-optimal basin.
\end{theorem}
We see that for a large class of activation functions (which constitute a dense set in the space of continuous functions by Lemma \ref{lemma::continuous_activation}), narrow networks do have sub-optimal basins. Increasing the network width can smooth the landscape by eliminating all sub-optimal basins. We define the term ``transition width'' as the smallest network width that eliminates all sub-optimal basins for any depth and generic input data. Then, for activation functions specified in Theorem \ref{thm::bad_basin_smooth}, the transition width takes place between $1$ and $N$. Further, Theorem \ref{thm::bad_basin_smooth} requires $\sigma''(0)\not=0$, i.e., non-linear around zero. Below we prove a similar result for another class of activation functions that are piecewise linear.
\begin{theorem}
    \label{thm::bad_basin_piecewise}
    Consider a 1-hidden-layer neural network with $N\geq 3$. The activation function $\sigma$ is piecewise linear with $\sigma(0) = \sigma'(0) = 0$. Suppose that $2m+1 \leq N$. Then, for a positive measure of input data $\mathbf{x}\in\mathbb{R}^N$, there exists output data $\mathbf{y}\in\mathbb{R}^N$ such that the empirical loss has a sub-optimal basin.
\end{theorem}

Table \ref{tab::transition_width} summarizes our results for different classes of activations and the corresponding transition widths. We also include more discussions in Appendix \ref{appen::discussions} on the implication of ``width eliminates bad basins''.
\begin{table}
\label{tab::transition_width}
\centering
\begin{tabular}{ |c|c|  }
 \hline
 Activation & Transition width $m^*$\\
 \hline
 \hline
 $\sigma \in C^2$ and $\sigma(0), \sigma'(0), \sigma''(0) \not= 0$ & $1<m^*\leq N$\\
 \hline
 piecewise linear with $\sigma(0) = \sigma'(0) = 0$ & $(N-1)/2 < m^* \leq N$\\
 \hline
 positive homogeneous & no phase transition\\
 \hline
\end{tabular}
\caption{Activation classes and corresponding transition widths. The term ``transition width'' is defined as the smallest network width that eliminates all sub-optimal basins for any depth and generic input data.}
\end{table}

Although Table \ref{tab::transition_width} does not cover all continuous activation functions, our proof of Theorem \ref{thm::bad_basin_exists} suggests a general computational approach on constructing sub-optimal basins for a specific neural network. By this approach, a numerical example for Swish activation function is presented in Section \ref{sec::proof_negative}.

Finally, one might also wonder what the implication of ``width eliminating bad basins'' is. See discussions in Appendix \ref{appen::discussions}.

\section{Main Idea and Proof Sketch}\label{sec::proof-idea}
In this section, we provide an overview of the proof of our positive results.

\subsection{First Idea: Property PT Holds for Many Activations}
Consider a $1$-hidden-layer network with $d$ input dimensions, $1$ output dimension and $m$ hidden-layer neurons. The output of the network is given by $\hat{\mathbf{y}}^T = \mathbf{v}^\top\sigma(WX) \in \mathbb{R}^{1\times N}$, where $X\in \mathbb{R}^{d\times N}$ is the input data matrix consisting of $N$ samples, $W\in\mathbb{R}^{m\times d}$ is the weight matrix to the hidden layer, and $\mathbf{v}\in\mathbb{R}^{m}$ is the weight vector to the output layer. For simplicity, consider the quadratic loss function $l(\mathbf{y},\hat{\mathbf{y}}) = ||\mathbf{y}-\hat{\mathbf{y}}||^2_2$. The network is assumed to be wide, i.e., $m \geq N$.

Now we denote $Z=\sigma(WX)\in\mathbb{R}^{m\times N}$ as the output matrix of the hidden layer. The 
optimization problem associated with the considered network is
\begin{equation}
\label{eq::sl-optimize}
\min_{\mathbf{v},W}\left|\left|\mathbf{y}^\top-\mathbf{v}^\top Z \right|\right|_2^2 .
\end{equation}
A simple but important observation \cite{gori1992problem} is that the problem \eqref{eq::sl-optimize} (with fixed $W$) is convex with respect to $\mathbf{v}$. Then, we can find a strictly decreasing path to the global minimum (with respect to $\mathbf{v}$) of \eqref{eq::sl-optimize}.\footnote{For a convex function, there exists a strictly decreasing path to the global infimum from any point that is not a global minimum. This result is formalized as Lemma \ref{lem::convex_strictlydecreasing}.} Moreover, if $Z$ is of full column rank, i.e., rank-$N$, then for any $\mathbf{y}$, there exists $\mathbf{v}$ such that $\mathbf{v}^\top Z=\mathbf{y}$, so the minimal value equals $0$, which is also the global minimal value of \eqref{eq::sl-optimize}. Thus, if for any network parameter $(\mathbf{v},W)$ that is not a global minimum, and the post-activation matrix $Z$ is full-rank, there is a decreasing path to a global minimum, and hence $(\mathbf{v}, W)$ cannot be a bad local minimum.

What if $Z$ is not full-rank? This is totally possible. For instance, if $W$ is a zero matrix, the corresponding $Z$ has at most rank-1. One natural solution is to use perturbation: after perturbing the parameter $(\mathbf{v},W)$, we may get a full-rank matrix $Z$ and thus a decreasing path to a global-min. If at any point $(\mathbf{v},W)$ this can be done, then the function satisfies property PT. We will formally state this derivation in a claim, and beforehand, we introduce another intermediate notion termed ``full-rank-reachable''. 

\begin{definition}[Full-rank-reachable]
\label{def::full_rank_reachable}
Consider a data matrix $X \in \mathbb{R}^{d \times N}$. If for any $W $, we can slightly perturb $W$ to $\hat{W}$ such that $ \hat{Z} = \sigma(  \hat{W} X)$ has rank $N$, then we say the activation $\sigma$ is full-rank-reachable. 
\end{definition}

\begin{claim}
  If $\sigma$ is full-rank-reachable,
  then the function defined in \eqref{eq::sl-optimize}
  satisfies Property PT. 
\end{claim}

The next crucial question is: what activation functions are full-rank-reachable (and thus guarantee Property PT)? A linear $\sigma $ is surely not enough, and it seems that certain non-linearity of $\sigma $ can provide lots of variations so that the matrix $\sigma(W X )$ can be perturbed to  have full rank. But how much ``non-linearity'' is needed? Is every non-linear function full-rank-reachable? The answer is ``no''. Specifically, we show that a monomial function $ \sigma(z) = z^p$ is not full-rank-reachable. 

\begin{example}[$z^p$ is not full-rank-reachable]
	Consider a 1-hidden-layer neural network with $2$ data samples, $1$ input dimension, $2$ hidden-layer neurons, and monomial activation $\sigma(z)=z^p$, $p > 0$. We have $W\in\mathbb{R}^{2\times 1}, X\in\mathbb{R}^{1\times 2}$, and therefore the rank of $WX$ is at most $1$. Denote $WX=\begin{bmatrix}
	a_1 & a_2\\ ta_1 & ta_2
	\end{bmatrix}$ where $t\in\mathbb{R}$. Then, $Z=\sigma(WX)=\begin{bmatrix}
	a_1^p & a_2^p\\ t^pa_1^p & t^pa_2^p
	\end{bmatrix}$ is always rank-deficient. This implies that we cannot find any $W$ such that $Z$ is of full column rank.
\end{example} 

This example implies that it is not straightforward to 
get full-rank-reachable, and the activation function needs to have ``sufficient nonlinearity''. In this paper, we identify a class of activation functions (analytic and with $N$ non-vanishing derivatives at zero whose orders form an arithmetic sequence, formally defined in Assumption \ref{ass::activation}) that are full-rank-reachable and thus guarantee
Property PT of the empirical loss. This yields Theorem \ref{thm::deep-convex-analytic}. To establish Theorem \ref{thm::continuous}, extra techniques are needed to extend this result to all continuous activation functions, which we discuss next.

\subsection{Extending to Any Continuous Activation}
We mentioned that there is a class of special analytic activation functions such that the empirical loss functions have Property PT. Unfortunately, these functions do not cover many commonly used activations like ReLU, leaky ReLU, etc.
How to proceed? 
The key observation is that these functions constitute a dense set in the space of continuous functions. That is, for any continuous function $f$, there exists a sequence of special analytic functions that uniformly converges to $f$. Based on this observation, a natural
idea is to show that the limit of a set
of ``nice'' functions is also ``nice'',
though we need to formally define ``nice''.
More specifically, we find that the property ``weakly global'' is preserved in the limit, then we obtain Theorem \ref{thm::continuous}.

To summarize, our proof of the positive results can be sketched in the following three steps:

\textbf{Step 1}: If the activation function is analytic and has non-vanishing derivatives at zero with an arithmetic sequence of $N$ orders, then Property PT holds, and the loss surface is a weakly global function.

\textbf{Step 2}: The set of activation functions in Step 1 is dense in the space of continuous functions.

\textbf{Step 3}: By the ``closedness'' of weakly global functions, the loss surface for any continuous activation is a weakly global function.

This proof sketch is illustrated in Figure  \ref{fig:roadmap}. 

\begin{figure}
    \centering
    \includegraphics[width=0.8\linewidth]{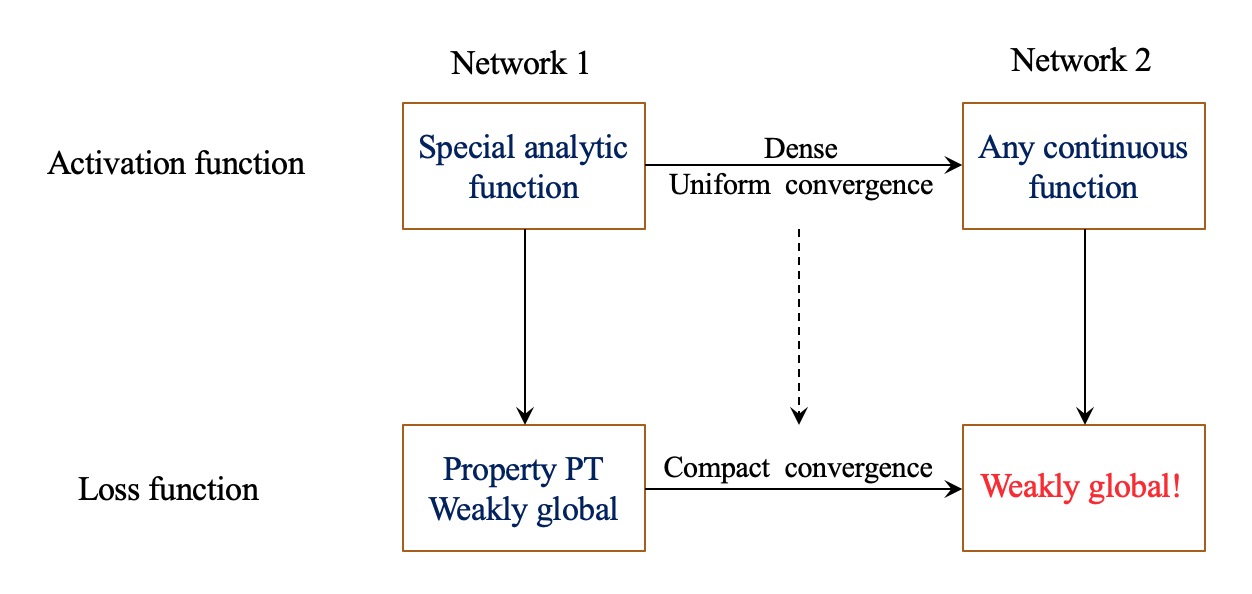}
    \caption{
    {\small Sketch of the proof. We first prove that the empirical loss function is weakly global for a class of analytic functions (the arrow from top left to bottom left). Then we show that these special functions are dense in the set of continuous functions (the arrow from top left to top right). Finally, we prove that the set of weakly global functions are closed (the arrow from the bottom left to bottom right). Combining the three steps, we obtain that the loss function is weakly global for any continuous activation (the arrow from top right to bottom right).}
    }
    \label{fig:roadmap}
\end{figure}

\section{Formal Proof of Positive Results}\label{sec::proof_positive}
As mentioned in Section \ref{sec::proof-idea}, our proof consists of three steps. By Step 1 we prove Theorem \ref{thm::deep-convex-analytic}, while by Step 2 and Step 3 we obtain Theorem \ref{thm::continuous}.

\subsection{Proof of Theorem \ref{thm::deep-convex-analytic} (Step 1)}
We first establish several lemmas and propositions needed for the proof. The first lemma shows that activation functions satisfying Assumption \ref{ass::activation} have a nice full-rank property.
\begin{lemma}\label{lem 1}
	Suppose that $\sigma$ is an analytic function satisfying Assumption \ref{ass::activation}. Then for any $N$ scalars $z_1, z_2, \cdots, z_N \in \mathbb{R}$ such that $z_n \not= 0$ and $z_n\neq z_{n'}$, for all $n, n'=1,2,\cdots, N$, $n\neq n'$, the following matrix
	\begin{equation}
	A=
	\begin{pmatrix}
	z_1^{l_0}\sigma^{(l_0)}(0) & z_2^{l_0}\sigma^{(l_0)}(0) & \cdots & z_N^{l_0}\sigma^{(l_0)}(0) \\
	z_1^{l_1} \sigma^{(l_1)}(0) & z_2^{l_1}\sigma^{(l_1)}(0) & \cdots & z_N^{l_1}\sigma^{(l_1)}(0) \\
	\vdots & \vdots & ~ & \vdots \\
	z_1^{l_{N-1}}\sigma^{(l_{N-1})}(0) & z_2^{l_{N-1}}\sigma^{(l_{N-1})}(0) & \cdots & z_N^{l_{N-1}}\sigma^{(l_{N-1})}(0) \\
	\end{pmatrix}
	\end{equation}
	is non-singular.
\end{lemma}
\begin{proof}
We first consider the matrix
\begin{equation}
	A^*=
	\begin{pmatrix}
	1 & 1 & \cdots & 1 \\
	z_1^{l_1-l_0} & z_2^{l_1-l_0} & \cdots & z_N^{l_1-l_0} \\
	\vdots & \vdots & ~ & \vdots \\
	z_1^{l_{N-1}-l_0} & z_2^{l_{N-1}-l_0} & \cdots & z_N^{l_{N-1}-l_0} 
	\end{pmatrix}\in \mathbb{R}^{N\times N}
\end{equation}
As $\{l_0, l_1, \cdots, l_{N-1}\}$ is an arithmetic sequence, $A^*$ is a Vandermonde matrix and hence is full-rank. For $A^*$, by first multiplying the $n$-th row by $\sigma^{(l_{n-1})}(0)$ for all $1\leq n \leq N$, then multiplying the $m$-th column by $z_m^{l_0}$ for all $1\leq m \leq N$, we obtain the matrix $A$. Since $\sigma^{(l_{n-1})}(0) \not= 0$ according to Assumption \ref{ass::activation}, and $z_m \not= 0$, $A$ is also a non-singular matrix.
\end{proof}

Next, we borrow an important result of \cite{mityagin2015zero} which states that the zero set of an analytic function is either the whole domain or of measure zero.
\begin{lemma} \label{lem 2}
For any $m\in \mathbb{Z}^+$, let $f: \mathbb{R}^m\rightarrow \mathbb{R}$ be a real analytic function. If $f$ is not identically zero, then its zero set $\Omega=\{\mathbf{z} \in \mathbb{R}^m \mid f(\mathbf{z})=0\}$ has measure zero.
\end{lemma}
Based on Lemma \ref{lem 2}, we have the following results.
\begin{lemma}\label{lem 3}
	Suppose that $\sigma$ is a non-constant analytic function.
	Given $\mathbf{a}, \mathbf{b} \in \mathbb{R}^N$, let $\Omega=\{\mathbf{z} \in \mathbb{R}^N \mid \sigma(\mathbf{a}^\top \mathbf{z})=\sigma(\mathbf{b}^\top \mathbf{z})\}$. If $\mathbf{a}\neq \mathbf{b}$, then $\Omega$ is of measure zero.
\end{lemma}
\begin{proof}   
Assume that $\Omega$ is not of measure zero. Since $\sigma(\mathbf{a}^\top \mathbf{z})-\sigma(\mathbf{b}^\top \mathbf{z})$ is an analytic function of $\mathbf{z}$, $\Omega$ must be $\mathbb{R}^N$ according to Lemma \ref{lem 2}. That is, $\sigma(\mathbf{a}^\top \mathbf{z})=\sigma(\mathbf{b}^\top \mathbf{z})$ for all $\mathbf{z} \in \mathbb{R}^N$. We will show that this leads to a contradiction.

If $\mathbf{a}=\mathbf{0}$ or $\mathbf{b}=\mathbf{0}$, assume $\mathbf{a}=\mathbf{0}$ without loss of generality. Then $\mathbf{b} \neq \mathbf{0}$, and there exists $\mathbf{z}_0$ such that $\mathbf{b}^\top \mathbf{z}_0=1$. Therefore, for any $\lambda\in\mathbb{R}$, we have $\sigma(\lambda)=\sigma(\mathbf{b}^\top (\lambda \mathbf{z}_0))=\sigma(\mathbf{a}^\top (\lambda \mathbf{z}_0))=\sigma(0)$. Thus, $\sigma$ is a constant function, leading to a contradiction.

If $\mathbf{a}, \mathbf{b}\neq \mathbf{0}$, there exists $\mathbf{z}_0$ such that $\mathbf{a}^\top\mathbf{z}_0 = 1$, $\mathbf{b}^\top \mathbf{z}_0 = q$ with $q\not=1$ and $q\not=0$. Without loss of generality, we assume $q<1$ (otherwise we swap $\mathbf{a}$ and $\mathbf{b}$). Then we have $\sigma(\lambda)=\sigma(\mathbf{a}^\top(\lambda\mathbf{z}_0)) = \sigma(\mathbf{b}^\top(\lambda\mathbf{z}_0)) = \sigma(q\lambda)$ for any $\lambda \in \mathbb{R}$. This implies
\begin{equation}
\label{eq::proof_exp_grow}
    \sigma'(\lambda) = \lim_{\delta\rightarrow 0} \frac{\sigma(\lambda + \delta)-\sigma(\lambda)}{\delta} = q\cdot\lim_{\delta\rightarrow 0} \frac{\sigma(q\lambda + q\delta )-\sigma(q\lambda)}{q\delta} = q\sigma'(q\lambda).
\end{equation}
Since $\sigma$ is not a constant function, there exists $\lambda \not= 0$ with $\sigma'(\lambda) \not=0$. By applying \eqref{eq::proof_exp_grow} successively, we have $\sigma'(\lambda) = q^k \sigma'(q^k\lambda)$ for any $k\in \mathbb{N}_+$. As $q<1$, we have $\sigma'(\lambda) = \lim_{k\rightarrow\infty}q^k\sigma'(q^k\lambda) = 0$, leading to a contradiction. We conclude that for any $\mathbf{a}\neq \mathbf{b}$, $\Omega$ cannot be $\mathbb{R}^N$ and therefore must be of measure zero.
\end{proof}
\begin{lemma}\label{lemma::data_equivalence}
Suppose that $\mathbf{x}^{(1)}, \mathbf{x}^{(2)}, \cdots, \mathbf{x}^{(N)} \in \mathbb{R}^{d_0}$ satisfy Assumption \ref{ass::data}. Then the two sets $\Omega =\{\mathbf{w} \in \mathbb{R}^{d_0} \mid \exists n' \not= n \text{ s.t. } \mathbf{w}^\top \mathbf{x}^{(n)} = \mathbf{w}^\top \mathbf{x}^{(n')}\}$ and $\Omega' = \{\mathbf{w} \in \mathbb{R}^{d_0} \mid \exists n \text{ s.t. } \mathbf{w}^\top \mathbf{x}^{(n)} = 0\}$ both have measure zero in $\mathbb{R}^{d_0}$. That is, for generic $\mathbf{w} \in \mathbb{R}^{d_0}$, $\mathbf{w}^\top \mathbf{x}^{(n)}$'s are distinct from each other, and are all non-zero.
\end{lemma}
\begin{proof}
Note that $\Omega$ is the union of $N(N-1)$ sets $\Omega_{n,n'} \triangleq \{ \mathbf{w}\in \mathbb{R}^{d_0} \mid   \mathbf{w}^\top \mathbf{x}^{(n)} = \mathbf{w}^\top \mathbf{x}^{(n')} \}$ with $n \not= n'$. Since $\mathbf{x}^{(1)},\mathbf{x}^{(2)},\cdots, \mathbf{x}^{(N)}$ are distinct from each other, each $\Omega_{n,n'}$ is a hyperplane, thus a zero-measure set. Therefore, $\Omega$ is also a zero-measure set.

Similarly, $\Omega'$ is the union of $N$ sets $\Omega'_{n} \triangleq \{ \mathbf{w}\in \mathbb{R}^{d_0} \mid \mathbf{w}^\top \mathbf{x}^{(n)} = 0 \}$. Since $\mathbf{x}^{(n)} \not= \mathbf{0}$, each $\Omega'_n$ is a hyperplane, thus a zero-measure set. Therefore, $\Omega'$ is of measure zero.
\end{proof}

Based on the above lemmas, below we show that for a deep neural networks satisfying Assumption 1 and Assumption 2, the output matrix of the last hidden layer has full column rank for almost all $(W_1, \cdots, W_H)$.
\begin{proposition}\label{extended prop 2}
Given a fully connected neural network with $H$ hidden layers, activation function $\sigma_h$ for each hidden layer, and empirical loss function $E(W)=l(Y, W_{H+1}T_H)$. Let $\Omega=\{(W_1, \cdots, W_{H})\mid \text{rank}(T_H) < N\}$. Suppose that Assumption \ref{ass::overall} holds and $\sigma_h$ satisfies Assumption \ref{ass::activation} for all $1\leq h \leq H$, then $\Omega$ is a zero-measure set.
\end{proposition}
\begin{proof}
We first handle the trivial case of $N=1$. In this case, $\Omega=\{(W_1, \cdots, W_H)\mid \text{rank}(T_H)=0\}$. Therefore, it is sufficient to show that $T_H\not=\mathbf{0}$ for almost all $(W_1, \cdots, W_H)$. Note that $T_H=\sigma_H(W_HT_{H-1})$, and since $\sigma_H\not\equiv0$ as it satisfies Assumption \ref{ass::activation}, the set of $W_H$ such that $T_H=\mathbf{0}$ is of measure zero. This implies that $\Omega$ is of measure zero. 

In the remaining proof, we always assume $N\geq2$. Note that when $N\geq2$, $\sigma_h$ cannot be a constant function since there exists $l\in\mathbb{N}_+$ such that $\sigma_h^{(l)}\not \equiv 0$. Thus, Lemma \ref{lem 3} applies.

In the following, we prove a slightly stronger result that the output matrix of the last hidden layer $T_H$ not only has full column rank but also, if regarded as an input data matrix, satisfies Assumption \ref{ass::data} for almost all $(W_1, \cdots, W_H)$. 

Specifically, denote $W_{1:h} = (W_1,W_2, \cdots, W_h)$, i.e., all the weights of the first $h$ hidden layers. Define
\begin{subequations}
\begin{gather}
\Omega_h = \{W_{1:h} \mid \mathrm{rank}(T_h)<\min\{d_h, N\} \}\\
\hat{\Omega}_{h} = \{W_{1:h} \mid \exists n \not= n', \quad \!\! s.t. \quad\!\! (T_h)_{n}=(T_h)_{n'}\}\cup \{W_{1:h} \mid \exists n, \quad \!\! s.t. \quad\!\! (T_h)_{n}  = \mathbf{0}\}
\end{gather}
\end{subequations}
where $(T_h)_n$ denotes the $n$-th column of $T_h$. $\Omega_h$ is the set of $W_{1:h}$ such that the output matrix of the $h$-th hidden layer is not full rank. $\hat{\Omega}_h$ is the set of $W_{1:h}$ such that $T_h$ has identical columns or has an all-zero column. That is, for any $W_{1:h} \in \hat{\Omega}_h$, the resulting $T_h$, if regarded as an input data matrix, violates Assumption \ref{ass::data}. 

In the following, we prove by induction that $\Phi_h \triangleq \Omega_h \cup \hat{\Omega}_h$ is of measure zero for all $1 \leq h \leq H $. Noting that $\Omega = \Omega_H\subseteq\Phi_H$, showing that $\Phi_H$ is of measure zero completes the proof.

We first prove that $\tilde{\Omega}_1$ is of measure zero. Let $(\mathbf{w}_1)_i^\top$ be the $i$-th row of $W_1$, $i = 1,2, \cdots, d_1$. Then, we have $(T_1)_{i,n}= \sigma((\mathbf{w}_1)_i^\top \mathbf{x}^{(n)})$. From Assumption \ref{ass::data}, $\mathbf{x}^{(1)}, \mathbf{x}^{(2)}, \cdots, \mathbf{x}^{(N)}$ are non-zero and distinct from each other. Note that $\sigma$ is not identically zero, by Lemma \ref{lem 2}, $(T_1)_{i,n} \not= 0$ for generic $W_1$. From Lemma \ref{lem 3}, for generic $W_1$, $(T_1)_{i,n} \not= (T_1)_{i,n'}$ for any $n \not = n'$. Therefore, $\hat{\Omega}_1$ is of measure zero.
  
Next we show that $\Omega_1$ is also of measure zero. From Lemma \ref{lemma::data_equivalence}, there exists an invertible matrix $Q\in \mathbb{R}^{d_0 \times d_0}$, such that $\tilde{X} \triangleq QX \in \mathbb{R}^{d_0 \times N}$ satifies
$\tilde{x}_{1,n}\not= 0$ and $\tilde{x}_{1,n}\not=\tilde{x}_{1,n'}$ for all $n,n' = 1,2,\cdots, N$ with $n \not= n'$. That is, for the first row of $\tilde{X}$, each entry is non-zero, and all entries are distinct from each other. We define $\tilde{W}_1 \triangleq W_1 Q^{-1}$, yielding $\tilde{W}_1\tilde{X} = W_1X$. Then, it suffices to show that
\begin{equation}
    \tilde{\Omega}_1 \triangleq \left\{\tilde{W}_1\in \mathbb{R}^{d_1\times d_0} \mid \text{rank}(T_1) < \min\{d_1,N\}\right\}
\end{equation}
is of measure zero.

Notice that $T_1\in\mathbb{R}^{d_1\times N}$. We first prove a special case with $d_1=N$ and then generalize the result to the case with $d_1<N$ or $d_1> N$.

\textbf{Case 1}: If $d_1=N$, we prove that $\tilde{\Omega}_1$ is of measure zero by induction on $N$. The conclusion is obvious when $N=1$.

For $N>1$, since $f(\tilde{W}_1)\triangleq\text{det}(T_1) = \text{det}(\sigma_1(\tilde{W}_1\tilde{X}))$ is analytic with respect to $\tilde{W}_1$, from Lemma \ref{lem 2} we know that $\tilde{\Omega}_1$ is either $\mathbb{R}^{N\times d_0}$ or a zero-measure set.

Suppose that $\tilde{\Omega}_1=\mathbb{R}^{N \times d_0}$, i.e., $f(\tilde{W}_1)\equiv0, ~\forall \tilde{W}_1\in\mathbb{R}^{N\times d_0}$. Note that the first row of $\tilde{X}$ has distinct entries. For any $l \in \mathbb{N}$, the $l$-th order partial derivative of $f(\tilde{W}_1)$ with respect to $(\tilde{W}_1)_{1,1}$ is given by\footnote{Notice that here we are taking derivatives with respect to a scalar $(\tilde{W}_1)_{1,1}$, i.e., the entry in the first row and the first column of $\tilde{W}_1$.}
{\small
\begin{align*}
&  \frac{\partial^lf}{\partial (\tilde{W}_1)_{1,1}^l} = \det(G_l(\tilde{W}_1))    \\
&  \triangleq \det \begin{pmatrix}
 (\tilde{x}^{(1)}_1)^l\sigma_1^{(l)}((\tilde{\mathbf{w}}_1)_1^\top \tilde{\mathbf{x}}^{(1)}) & (\tilde{\mathbf{x}}^{(1)}_2)^l\sigma_1^{(l)}((\tilde{\mathbf{w}}_1)_1^\top \tilde{\mathbf{x}}^{(2)}) & \cdots & (\tilde{\mathbf{x}}^{(1)}_N)^l\sigma_1^{(l)}((\tilde{\mathbf{w}}_1)_1^\top \tilde{\mathbf{x}}^{(N)}) \\
\sigma_1((\tilde{\mathbf{w}}_1)_2^\top \tilde{\mathbf{x}}^{(1)}) & \sigma_1((\tilde{\mathbf{w}}_1)_2^\top \tilde{\mathbf{x}}^{(2)}) & \cdots & \sigma_1((\tilde{\mathbf{w}}_1)_2^\top \tilde{\mathbf{x}}^{(N)}) \\
\vdots & \vdots & ~ & \vdots \\
\sigma_1((\tilde{\mathbf{w}}_1)_N^\top \tilde{\mathbf{x}}^{(1)}) & \sigma_1((\tilde{\mathbf{w}}_1)_N^\top \tilde{\mathbf{x}}^{(2)}) & \cdots & \sigma_1((\tilde{\mathbf{w}}_1)_N^\top \tilde{\mathbf{x}}^{(N)})
    \end{pmatrix}
\end{align*}
}
\!where $(\tilde{\mathbf{w}}_1)^\top_i$ denotes the $i$-th row of $\tilde{W}_1$, and $\tilde{\mathbf{x}}^{(n)}$ denotes the $n$-th column of $\tilde{X}$. As $f(\tilde{W}_1)\equiv0$, we have $\det(G_l(\tilde{W}_1))\equiv0$ for all $\tilde{W}_1\in\mathbb{R}^{N\times d_0}$ and $l \in \mathbb{N}$.

Denote the $n$-th row of $G_l(\tilde{W}_1)$ by 
\begin{equation}
    \mathbf{u}_n=[\sigma_1((\tilde{\mathbf{w}}_1)_n^\top \tilde{\mathbf{x}}^{(1)}), \cdots, \sigma_1((\tilde{\mathbf{w}}_1)_n^\top \tilde{\mathbf{x}}^{(N)})]^\top
\end{equation}
for all $n=2,\cdots, N$. We show that there exists some $(\tilde{\mathbf{w}}_1)_2^\top, \cdots, (\tilde{\mathbf{w}}_1)_N^\top$ such that $\mathbf{u}_2, \cdots, \mathbf{u}_N$ are linearly independent. Denote
\begin{equation}
\hat{\mathbf{u}}_n=\left[\sigma_1((\tilde{\mathbf{w}}_1)_n^\top \tilde{\mathbf{x}}^{(1)}), \cdots, \sigma_1((\tilde{\mathbf{w}}_1)_n^\top \tilde{\mathbf{x}}^{(N-1)})\right]^\top \in \mathbb{R}^{N-1}, n=2, \cdots, N
\end{equation}
and $\hat{G}=[\hat{\mathbf{u}}_2, \cdots, \hat{\mathbf{u}}_N]^\top \in \mathbb{R}^{(N-1) \times (N-1)}$. According to the induction hypothesis, the set $\{\left((\tilde{\mathbf{w}}_1)_2^\top, \cdots, (\tilde{\mathbf{w}}_1)_N^\top\right) \mid \text{det}(\hat{G})\neq0\}$ is zero-measure in $\mathbb{R}^{(N-1)\times d_0}$. Therefore, there exist $(\tilde{\mathbf{w}}_1)_2^\top, \cdots, (\tilde{\mathbf{w}}_1)_N^\top$ such that $\hat{\mathbf{u}}_2, \cdots, \hat{\mathbf{u}}_N$ are linearly independent, which also implies that $\mathbf{u}_2, \cdots, \mathbf{u}_N$ are linearly independent.

Now we have found $(\tilde{\mathbf{w}}_1)_2^\top, \cdots, (\tilde{\mathbf{w}}_1)_N^\top$ such that $\mathbf{u}_2, \cdots, \mathbf{u}_N$ are linearly independent. Fix $(\tilde{\mathbf{w}}_1)_2^\top, \cdots, (\tilde{\mathbf{w}}_1)_N^\top$ and let $(\tilde{\mathbf{w}}_1)_1^\top=\mathbf{0}$. Denote the first row of $G_l$ as $\mathbf{a}_l$. Since $\text{det}(G_l(\tilde{W})) = 0$ for any $l\geq 0$, $\mathbf{a}_l$ must be a linear combination of $\mathbf{u}_2, \cdots, \mathbf{u}_N$ for any $l \in \mathbb{N}$, so all $\mathbf{a}_l$'s lie in an $(N-1)$-dimension subspace of $\mathbb{R}^N$. However, according to Lemma \ref{lem 1}, the $N$ vectors $\mathbf{a}_{l_0}, \cdots, \mathbf{a}_{l_{N-1}}$ are linearly independent, where $l_0, l_1, \cdots, l_{N-1}$ form an arithmetic sequence specified in Assumption \ref{ass::activation}. This leads to a contradiction. 
Therefore, $\tilde{\Omega}_1$ cannot be $\mathbb{R}^N$, so it must be a zero-measure set.

\textbf{Case 2}: If $d_1<N$, we select the first $d_1$ columns of $T_1$ and obtain a sub-matrix $T'_1\in\mathbb{R}^{d_1\times d_1}$. Let $\tilde{\Omega}_1'=\{\tilde{W}_1\in \mathbb{R}^{d_1\times d_0}\mid \text{rank}(T'_1)<d_1\}$. We can show that $\tilde{\Omega}_1'$ is a zero-measure set by applying a similar analysis to $T'_1$ as in Case 1. Notice that for any $\tilde{W}_1\in\tilde{\Omega}_1$, any $d_1 \times d_1$ sub-matrix of $T_1$ should be singular. Therefore, $\tilde{\Omega}_1$ is a subset of $\tilde{\Omega}_1'$, and it should also have measure zero. 
    
\textbf{Case 3}: If $d_1> N$, we select the first $N$ rows of $T_1$ and obtain a sub-matrix $T'_1\in\mathbb{R}^{N \times N}$. Similarly, let $\tilde{W}'_1\in\mathbb{R}^{N\times d_0}$ be the first $N$ rows of $\tilde{W}_1$. Let $\tilde{\Omega}_1'=\{\tilde{W}'_1\in \mathbb{R}^{N\times d_0}\mid \text{rank}(T'_1)<N\}$. Following a similar analysis as in Case 1, $\tilde{\Omega}_1'$ is of measure zero in $\mathbb{R}^{N\times d_0}$. Note that for any $\tilde{W}_1\in\tilde{\Omega}_1$, its submatrix consisting of the first $N$ rows is in $\tilde{\Omega}_1'$. Thus, $\tilde{\Omega}_1$ is of measure zero in $\mathbb{R}^{d_1 \times d_0}$.

Combining the above analysis, $\tilde{\Omega}_1$ is of measure zero. This implies that $\Omega_1$ is also of measure zero. As a result, $\Phi_1 = \Omega_1 \cup \hat{\Omega}_1$ is of measure zero.
	
Now assume that $\Phi_{h-1}$ is of measure zero. Then $\Phi_h$ can be decomposed into
\begin{equation}
\label{eq 2}
	 \Phi_h =   \left\{W_{1:h}\in \Phi_h \mid W_{1:(h-1)}\in \Phi_{h-1}\right\} \cup
\left\{W_{1:h}\in \Phi_h \mid W_{1:(h-1)}\not \in \Phi_{h-1}\right\}
\end{equation}
By the induction hypothesis, the first component of the set union in (\ref{eq 2}) has measure zero in the space of $W_{1:h}$. Moreover, for $W_{1:(h-1)}\notin\hat{\Omega}_{h-1}$, the resulting $T_{h-1}$, if regarded as an input data matrix, satisfies Assumption \ref{ass::data}. Noting that $\sigma_h$ satisfies Assumption \ref{ass::activation}, following a similar procedure as in the case of $h=1$, we obtain that the set of $W_h$ satisfying $(W_{1:(h-1)}, W_h)\in\Omega_h \cup \hat{\Omega}_h$ has measure zero in $\mathbb{R}^{d_{h}\times d_{h-1}}$. This implies that the second component of the set union in (\ref{eq 2}) also has measure zero in the space of $W_{1:h}$. Therefore, $\Phi_h$ is of measure zero for all $1\leq h\leq H$.

By assumption, $d_H \geq N$, and hence $\Omega = \Omega_H\subseteq\Phi_H$. We complete the proof.
\end{proof}

We note that Proposition \ref{extended prop 2} states that the output matrix of the hidden layer is generically full-column-rank, i.e., an extended notion of Definition \ref{def::full_rank_reachable}. 

We next present another Lemma used in the proof of Theorem \ref{thm::deep-convex-analytic}.
\begin{lemma}\label{lem::convex_strictlydecreasing}
Let $f:\mathbb{R}^m\rightarrow \mathbb{R}$ be a convex function. From any $\mathbf{p}_0 \in \mathbb{R}^m$ that is not a global minimum of $f$, there exists a strictly decreasing path to the global infimum in the sense of Definition \ref{def::path_to_globalinf}.
\end{lemma}
\begin{proof}
Note that $f$ is a convex function defined on the whole space of $\mathbb{R}^m$, and hence is continuous on $\mathbb{R}^m$. Since $\mathbf{p}_0$ is not a global minimum of $f$, there exists a sequence of points $\{\mathbf{p}_i\}^{\infty}_{i=0}$ (starting from $\mathbf{p}_0$) such that $f(\mathbf{p}_{i+1}) < f(\mathbf{p}_i)$ for all $i\in\mathbb{N}$, and 
$\lim_{i\rightarrow\infty} f(\mathbf{p}_i)=\inf_{\mathbf{p}\in\mathbb{R}^m}f(\mathbf{p})$.

Next, we construct another sequence $\{\mathbf{p}'_i\}^{\infty}_{i=1}$ as follows. First, let $\mathbf{p}'_0 = \mathbf{p}_0$, then clearly we have $f(\mathbf{p}'_0) = f(\mathbf{p}_0)$. Suppose that $\mathbf{p}'_i$ is determined and $f(\mathbf{p}'_i) = f(\mathbf{p}_i)$. We consider the line segment between $\mathbf{p}'_i$ and $\mathbf{p}_{i+1}$, i.e., the mapping
\begin{equation}
\label{eq::convex_dp_line1}
    g_i(t) \triangleq (1-t) \mathbf{p}'_i + t \mathbf{p}_{i+1}, ~~ t\in [0,1].
\end{equation}
If $f(g_i(t))$ is strictly decreasing for $t\in[0,1]$, we directly set $\mathbf{p}'_{i+1} = \mathbf{p}_{i+1}$. Otherwise, we define $t^*$ as the smallest minimum of $g_i(t)$ in $[0,1]$, i.e.,
\begin{equation}
    t^* \triangleq\min\left\{t\in[0, 1]: f(g_i(t))=\min_{t'\in[0, 1]}f(g_i(t'))\right\}.
\end{equation} Since $f(g_i(0))=f(\mathbf{p}'_i) = f(\mathbf{p}_i) > f(\mathbf{p}_{i+1})=f(g_i(1))$, we have $t^*>0$.

We show that $f(g_i(t))$ is strictly decreasing for $t\in[0,t_*]$. If this is not the case, then there exist $0\leq t_1< t_2\leq t^*$ satisfying $f(g_i(t_1))\leq f(g_i(t_2))$. We have
\begin{subequations}
\label{eq::convex_dp_ieq}
    \begin{align}
    \label{eq::convex_dp_ieq0}
    f(g_i(t_2)) &= f\left((1-t_2)\mathbf{p}'_i+\mathbf{p}_{i+1}\right)\\
    \label{eq::convex_dp_ieq1}
    &=f\left(\frac{t^*-t_2}{t^*-t_1}g_i(t_1)+\frac{t_2-t_1}{t^*-t_1}g_i(t^*)\right) \\
    \label{eq::convex_dp_ieq2}
    &\leq \frac{t^*-t_2}{t^*-t_1}f(g_i(t_1))+\frac{t_2-t_1}{t^*-t_1}f(g_i(t^*))\\
    \label{eq::convex_dp_ieq3}
    & \leq \frac{t^*-t_2}{t^*-t_1}f(g_i(t_2))+\frac{t_2-t_1}{t^*-t_1}f(g_i(t_2)) = f(g_i(t_2))
    \end{align}
\end{subequations}
where \eqref{eq::convex_dp_ieq0}, \eqref{eq::convex_dp_ieq1} follow from \eqref{eq::convex_dp_line1}, and \eqref{eq::convex_dp_ieq2} is by the convexity of $f$. The equality of \eqref{eq::convex_dp_ieq3} holds only if $f(g_i(t_1))=f(g_i(t^*))= f(g_i(t_2))$, implying that $t_1$ is also a minimum of $f(g_i(t))$. However, $t_1  < t^*$, which is contradictory to the definition of $t^*$. Therefore, $g_i(t)$ must be strictly decreasing for $t\in [0,t^*]$. Note that $f$ and $g_i$ are both continuous mappings. From $f(g_i(0)) = f(\mathbf{p}_i) > f(g_i(1)) \geq f(g_i(t^*))$, there exists $t' \in (0,t^*]$ such that $f(g_i(t')) = f(\mathbf{p}_{i+1})$. Then, $f(g_i(t))$ is also strictly decreasing for $t\in[0,t']$. We set $\mathbf{p}'_{i+1} = g_i(t')$, and hence we have $f(\mathbf{p}'_{i+1}) = f(\mathbf{p}_{i+1})$.

Finally, we denote $t_i \triangleq 1-2^{-i}$ and construct a mapping $g:[0,1) \rightarrow \mathbb{R}^m$ as
\begin{equation}
    g(t) = \begin{cases}
    \mathbf{p}'_i,& t = t_i, ~~i\in \mathbb{N} \\
    \frac{t_{i+1}-t}{t_{i+1}-t_i}\mathbf{p}'_i +   \frac{t-t_i}{t_{i+1}-t_i}\mathbf{p}'_{i+1},& t \in \left(t_i, t_{i+1}\right), ~~i\in \mathbb{N}
    \end{cases}.
\end{equation}
Note that $g(t)$ connects $\{\mathbf{p}'_i\}^{\infty}_{i=1}$ sequentially by line segments. Thus $g$ is a continuous mapping. Also, we have $g(0) = \mathbf{p}'_0 = \mathbf{p}_0$ and $f(\lim_{t\rightarrow 1}g(t)) = \lim_{i\rightarrow\infty} f(\mathbf{p}'_i)=\lim_{i\rightarrow\infty} f(\mathbf{p}_i) = \inf_{\mathbf{p}\in\mathbb{R}^m}f(\mathbf{p})$. Further, we note that $f(\mathbf{p}'_i) > f(\mathbf{p}'_{i+1})$, and from the construction of $\{\mathbf{p}'\}^{\infty}_{i=1}$, $f(g(t))$ is strictly decreasing for $t\in [t_i,t_{i+1}]$, $\forall i\in\mathbb{N}$. This implies that $f(g(t))$ is strictly decreasing for $t\in[0,1)$. Therefore, $g$ is a strictly decreasing path to the global infimum. We complete the proof.
\end{proof}

Based on Proposition \ref{extended prop 2} and Lemma \ref{lem::convex_strictlydecreasing}, we now proceed to prove Theorem \ref{thm::deep-convex-analytic}.

\begin{proof}[Proof of Theorem \ref{thm::deep-convex-analytic}]
According to Proposition \ref{extended prop 2}, all $W$'s that entail a non-full-rank $T_H$ only constitute a zero-measure set. Therefore, for an arbitrary initial weight $W^o = (W^o_1, \cdots, W^o_{H+1})$ and an arbitrarily small $\delta>0$, there exists $W^p = (W^p_1, \cdots, W^p_{H+1}) \in B(W^o, \delta)$ such that the corresponding $T_H$ is full-rank. Since $d_H\geq N$, we have $\text{rank}(T^p_H)=N$.
	
In what follows, we show that starting from $W^p$, there exists a continuous path along which the empirical loss $E(W)$ strictly decreases to $\inf_{W} E(W)$. 
Denote $W_{1:H}=(W_1, \cdots, W_H)$, i.e., the weights in the first $H$ layers. By the feed-forward operation \eqref{eq 1}, $T_H$ is a function of $W_{1:H}$. Thus, $E(W)$ can be rewritten as $l(Y, W_{H+1}T_H(W_{1:H}))$. Since $l(Y, \hat{Y})$ is convex with respect to $\hat{Y}$, for any $W_{H+1}$,  $W'_{H+1}$, and $\lambda\in[0, 1]$, we have
\begin{align}
&l\left(Y, \left(\lambda W_{H+1}+(1-\lambda)W'_{H+1}\right)T_H(W_{1:H})\right)\nonumber\\
= &~ l\left(Y, \lambda W_{H+1}T_H(W_{1:H})+(1-\lambda)W'_{H+1}T_H(W_{1:H})\right) \nonumber \\
\leq& ~\lambda l\left(Y, W_{H+1}T_H(W_{1:H})\right)+(1-\lambda)l\left(Y, W'_{H+1}T_H(W_{1:H})\right).
\end{align}
Thus, with the weights to the first $H$ hidden layers fixed, $E(W)$ is convex with respect to $W_{H+1}$. By Lemma \ref{lem::convex_strictlydecreasing}, starting from $W^p$, we can find a strictly decreasing path towards $\inf_{W_{H+1}}l(Y, W_{H+1}T_H(W^p_{1:H}))$ by fixing $W_{1:H} = W^p_{1:H}$ and moving along $W_{H+1}$.
Moreover, since $T_H(W^p_{1:H}) \in \mathbb{R}^{d_{H} \times N}$ is full-column-rank, for any $\hat{Y}\in\mathbb{R}^ {d_{H+1}\times N}$, there exists $W_{H+1}$ such that $W_{H+1}T_H(W^p_{1:H})=\hat{Y}$, yielding
\begin{equation}
\inf_{W_{H+1}}l(Y, W_{H+1}T_H(W^p_{1:H}))=\inf_{\hat{Y}}l(Y, \hat{Y})=\inf_W E(W).
\end{equation}
Therefore, the constructed path is strictly decreasing towards $\inf_W E(W)$.
\end{proof}

Finally, we provide a formal proof of Proposition \ref{prop::PT_weaklyglobal} (presented in Section \ref{sec::main-results}), which states that Property PT implies being weakly global.
\begin{proof}[Proof of Proposition \ref{prop::PT_weaklyglobal}]
We prove the result by contraposition. Assume in contrast that there exists a sub-optimal set-wise strict local minimum of $f(W)$, denoted by $\mathcal{W}$. Note by Definition \ref{def_2}, $\mathcal{W}$ is a compact set. Let $\mathcal{W}_\delta=\{W'\mid \inf_{W\in\mathcal{W}}\|W'-W\|_2\leq\delta \}$, then there exists $\delta>0$ such that for all $W\in\mathcal{W}$ and $W'\in\mathcal{W}_\delta\setminus \mathcal{W}$, $f(W)<f(W')$. Denote $\partial \mathcal{W}_\delta$ as the boundary of $\mathcal{W}_\delta$. Note that both $\mathcal{W}_\delta$ and $\partial\mathcal{W}_\delta$ are compact, there exists $W^* \in \partial \mathcal{W}_\delta$ such that $f(W^*) = \inf_{W'\in\partial \mathcal{W}_\delta}f(W')$. Moreover, $f(W^*)=\sup_{W\in\mathcal{W}}f(W)+\varepsilon$ for some $\varepsilon>0$. 
	
Consider an arbitrary point $W^o\in\mathcal{W}$. Since $f(W)$ is a continuous function, there exists $\delta>\delta_0>0$ such that for any $W'\in B(W^o, \delta_0)$, $|f(W')-f(W^o)|<\varepsilon/2$. From the previous analysis, we can find $W^p \in B(W^o, \delta_0)$ such that there exists a strictly decreasing path from $W^p$ to the global infimum of $f(W)$. Since $\mathcal{W}$ is a sub-optimal local minimum, $\inf_{W\in \mathcal{W}_{\delta}} f(W) > \inf_{W} f(W)$. Therefore, the above strictly decreasing path starting from $W^p$ must pass through the boundary $\partial \mathcal{W}_\delta$. However, $f(W^p)<f(W^o)+\varepsilon/2< \sup_{W\in\mathcal{W}}f(W)+\varepsilon=f(W^*)=\inf_{W'\in\partial \mathcal{W}_\delta}f(W')$. This implies that the considered path can never be strictly decreasing, leading to a contradiction. Therefore, we conclude that there is no sub-optimal set-wise strict local minima, and therefore $f(W)$ is a weakly global function.
\end{proof}

\subsection{Proof of Theorem \ref{thm::continuous} (Step 2 and Step 3)}
We proceed to \textbf{Step 2}, i.e., to show that the activation function in Assumption \ref{ass::activation} can approximate any continuous function. This allows us to extend Corollary \ref{cor::deep-convex-analytic} to all continuous activation functions without dealing with them directly. Specifically, we use a mathematical trick to approximate an arbitrary continuous function by a class of analytical functions. 

\begin{lemma}\label{lemma::continuous_activation}
For any continuous function $f: \mathbb{R} \rightarrow \mathbb{R}$, there exists a sequence of functions $(f_k)_{k\in\mathbb{N}}$, all satisfying Assumption \ref{ass::activation}, such that $f_k$ converges to $f$ uniformly. 
\end{lemma}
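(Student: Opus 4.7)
The plan is to combine Carleman's approximation theorem with a small, structured perturbation that fixes finitely many derivatives at a single point. Carleman's theorem states that for every continuous $f:\mathbb{R}\to\mathbb{R}$ and every $\varepsilon>0$, there exists an entire function $g$ with $\sup_{x\in\mathbb{R}}|g(x)-f(x)|<\varepsilon$. Applying this with $\varepsilon_k = 1/(2k)$ yields a sequence of entire (hence real-analytic) functions $g_k$ satisfying $\|g_k-f\|_\infty < 1/(2k)$ on all of $\mathbb{R}$. This immediately addresses the uniform-on-$\mathbb{R}$ requirement, but says nothing about the derivative conditions in Assumption \ref{ass::activation}.

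To enforce $f_k^{(j)}(0)\neq 0$ for $j=0,1,\ldots,n-1$, I will perturb $g_k$ by a fixed bounded entire function whose first $n$ derivatives at $0$ are all nonzero. A convenient choice is $h(x)=\sin(x)+\cos(x)=\sqrt{2}\,\sin(x+\pi/4)$, which satisfies $\|h\|_\infty\leq\sqrt{2}$ and whose derivatives at $0$ cycle through $1,1,-1,-1,1,1,\ldots$, all nonzero.

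For each $k$, I set $f_k = g_k + \delta_k h$, with the scalar $\delta_k$ chosen as follows. Each equation $g_k^{(j)}(0)+\delta_k h^{(j)}(0)=0$ for $j\in\{0,\ldots,n-1\}$ uniquely determines a single forbidden value of $\delta_k$ (using $h^{(j)}(0)\neq 0$), so there are at most $n$ values to avoid. I can therefore pick $\delta_k\in\bigl(0,\,1/(2k\sqrt{2})\bigr)$ so that $f_k^{(j)}(0)\neq 0$ for every $j=0,\ldots,n-1$. Then $f_k$ is entire, hence analytic, satisfies Assumption \ref{ass::activation}, and
\[
\sup_{x\in\mathbb{R}}|f_k(x)-f(x)|\ \leq\ \|g_k-f\|_\infty + \delta_k\|h\|_\infty\ <\ \tfrac{1}{2k}+\tfrac{1}{2k}\ =\ \tfrac{1}{k},
\]
so $f_k\to f$ uniformly on $\mathbb{R}$.

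The main obstacle is obtaining uniform approximation on \emph{all} of $\mathbb{R}$ by analytic functions. Elementary constructions such as truncated Taylor polynomials, Bernstein polynomials, Stone--Weierstrass on compacts, or convolution with a Gaussian mollifier yield only locally uniform (i.e., uniform-on-compacts) approximation, which is strictly weaker than what the lemma asserts. Carleman's theorem is the essential ingredient that circumvents this gap; once it is invoked, the remaining task of adjusting finitely many derivatives at a single point is handled by the elementary finite-avoidance argument above.
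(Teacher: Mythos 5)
Your proof is correct and follows essentially the same route as the paper's: invoke Carleman's theorem to obtain entire approximants uniformly on all of $\mathbb{R}$, then perturb by a small multiple of $\sin x+\cos x$ (whose derivatives at $0$ are all $\pm 1$) to force the first $n$ derivatives at $0$ to be nonzero. The only difference is organizational --- the paper first proves the special class is dense in the analytic functions and then combines with Carleman via a diagonal argument, whereas your per-$k$ finite-avoidance choice of $\delta_k$ accomplishes the same in a single pass.
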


\begin{proof}
The proof of Lemma \ref{lemma::continuous_activation} consists of two parts. In the first part, we show that the function class specified by Assumption \ref{ass::activation} is dense in the space of analytic functions. In the second part, following the fact that the space of analytic functions is a dense set in the space of continuous functions, we prove that the function class specified by Assumption \ref{ass::activation} is also dense in the space of continuous functions.

For the first part, we consider an arbitrary analytic function $g: \mathbb{R} \rightarrow \mathbb{R}$. Define $f_{k}(x) = g(x) + (\sin x + \cos x)/[s(k+1)]$.
Clearly, $f_k$ is analytic for any $k \in \mathbb{N}$ and $s \not= 0$. Further, the $n$-th order derivative of $f_k$ at zero is given by $f^{(n)}_k(0) = g^{(n)}(0) + (-1)^{\left\lfloor \frac{n}{2}\right \rfloor}/[s(k+1)].$
We next show that there exists $s \not= 0$ such that all $f_k$'s satisfy Assumption $\ref{ass::activation}$. Consider the following two cases.

\textbf{Case 1}: $g^{(n)}(0) = 0$ for all $0 \leq n \leq N-1$. For any $s \not= 0$, since $g^{(n)}(0) = 0$, we have $f^{(n)}_k(0)  = (-1)^{\left\lfloor \frac{n}{2}\right\rfloor}/[s(k+1)] \not= 0$ for all $n = 0,1,\cdots, N-1$. Since $\{0, 1, \cdots, N-1\}$ is an arithmetic sequence, all $f_k$'s satisfy Assumption $\ref{ass::activation}$. 

\textbf{Case 2}: $g^{(n)}(0) \not = 0$ for some $0 \leq n \leq N$. Since $g^{(n)}(0) \not= 0$ for at least one $n \in \{0,1,\cdots,N-1\}$, we can define 
\begin{equation}
\label{deltamin}
\delta_{\min} = \min\left\{|g^{(n)}(0)|: 0\leq n \leq N, \quad\!\!g^{(n)}(0) \not= 0\right\}
\end{equation}
i.e., the minimum non-zero absolute value of $g^{(n)}(0)$ for $0\leq n\leq N-1$. Clearly, $\delta_{\min}>0$. Letting $s = 2/\delta_{\min}$, we have $f^{(n)}_k(0) = g^{(n)}(0) + (-1)^{\left\lfloor \frac{n}{2}\right \rfloor}\cdot \delta_{\min}/[2(k+1)]$.
For $g^{(n)}(0) = 0$, we have $|f^{(n)}_k(0)|=\delta_{\min} / [2(k+1)] > 0$. For $g^{(n)}(0) \not= 0$, we have
\begin{equation*}
    \left|f^{(n)}_k(0)\right|\geq \left|g^{(n)}(0)\right| - \left|\frac{\delta_{\min}}{2(k+1)}(-1)^{\left\lfloor \frac{n}{2}\right \rfloor}\right|\geq \delta_{\min} - \frac{\delta_{\min}}{2(k+1)} = \frac{\delta_{\min}(2k+1)}{2(k+1)} > 0
\end{equation*}
by the definition of $\delta_{\min}$ in \eqref{deltamin}. Therefore, $f^{(n)}_k(0) \not=0$ for all $n=0,1,\cdots, N-1$ and $k\in \mathbb{N}$. Since $\{0,1,\cdots, N-1\}$ is an arithmetic sequence, all $f_k$'s satisfy Assumption \ref{ass::activation}.

We now prove the uniform convergence of $f_{k}$ to $f$ for any $s \not = 0$. Specifically, for any $\epsilon > 0$, we have 
\begin{equation}
\left|f_k(x) - g(x)\right| = \frac{1}{s(k+1)}\left|\sin x + \cos x\right|\leq \frac{\sqrt{2}}{s(k+1)}  < \epsilon
\end{equation}
for all $k > \sqrt{2}/(\epsilon s) - 1$ and $x \in \mathbb{R}$. Therefore, $f_k$ converges uniformly to $g$.

We conclude that the function class specified by Assumption \ref{ass::activation} is dense in the space of analytic functions.

Now we come to the second part. By the Carleman Approximation Theorem \cite{kaplan1955approximation}, the space of analytic functions is dense in the space of continuous functions. That is, for any continuous function $f: \mathbb{R} \rightarrow \mathbb{R}$, there exists a sequence of analytic functions $(g_k)_{k\in\mathbb{N}}$ converging to $f$ uniformly. Following the idea of Cantor's diagonal argument, we can construct a sequence of functions, all satisfying Assumption \ref{ass::activation}, that also converges to $f$.

Specifically, by the analysis in the first part, for each analytic function $g_k$ we can construct $(f^{(k)}_j)_{j\in \mathbb{N}}$, all satisfying Assumption \ref{ass::activation}, such that $f^{(k)}_j$ converges to $g_k$ uniformly. Further, we can require that for each $k\in \mathbb{N}$,
\begin{equation}
\label{eq::seq_construct}
\left|f^{(k)}_j(x) - g_k(x)\right| \leq \frac{1}{k+1}, \quad \!\! \forall x\in \mathbb{R}, \quad \!\! j \in \mathbb{N}.
\end{equation}
In fact, if \eqref{eq::seq_construct} is not satisfied, we can always delete a finite number of functions at the beginning of $(f_j^{(k)})_{j\in\mathbb{N}}$, so as to produce a new sequence that meets \eqref{eq::seq_construct}. Now consider the sequence $(f^{(k)}_k)_{k\in \mathbb{N}}$. Since $g_k$ converges to $f$ uniformly, for any $\epsilon > 0$, there exists a $K_1 \in \mathbb{N}$ such that $|g_k(x) - f(x)|\leq \epsilon/2$ for any $k\geq K_1$ and $x \in \mathbb{R}$. Then, for any $k > \max\{K_1, 2/\epsilon -1\}$, we have
\begin{equation}
\left|f^{(k)}_k(x) - f(x)\right| \leq    \left |f^{(k)}_k(x) - g_k(x)\right| + \left|g_k(x) - f(x)\right| 
\leq \frac{1}{k+1} +\epsilon/2 \leq \epsilon.
\end{equation}
Therefore, $f^{(k)}_k$ converges to $f$ uniformly. We complete the proof.
\end{proof}

Lemma \ref{lemma::continuous_activation} means that the analytic functions satisfying Assumption \ref{ass::activation} constitute a dense set (in the sense of uniform convergence) in the space of continuous functions. Note that a deep neural network contains compositions of activation functions. Subsequently, we introduce the notion of compact convergence and show that compact convergence is preserved under the composition of functions.
\begin{definition}[Compact convergence]
A sequence of functions $(f_k)_{k\in \mathbb{N}}: S \rightarrow \mathbb{R}^m$ is said to compactly converge to $f: S \rightarrow \mathbb{R}^m$ if, for every compact subset $K \subset S$, $f_k \rightarrow f$ uniformly on $K$.
\end{definition}
\begin{lemma}\label{lemma::nesting}
Consider two continuous functions $f: S\rightarrow \mathbb{R}^n$ and $g: \mathbb{R}^{n} \rightarrow \mathbb{R}$, where $S \subset \mathbb{R}^m$ is a compact set. Suppose that there exist two sequences of functions $(f_k)_{k\in\mathbb{N}}$ and $(g_k)_{k\in \mathbb{N}}$, such that $f_k$ uniformly converges to $f$ on $S$, and $g_k$ compactly converges to $g$ on $\mathbb{R}^{n}$. Then, $g_k\circ f_k$ converges to $g\circ f$ uniformly on $S$.
\end{lemma}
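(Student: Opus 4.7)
The plan is to bound $|g_k(f_k(x)) - g(f(x))|$ uniformly in $x \in S$ by the triangle inequality
\begin{equation*}
|g_k(f_k(x)) - g(f(x))| \;\leq\; |g_k(f_k(x)) - g(f_k(x))| \;+\; |g(f_k(x)) - g(f(x))|,
\end{equation*}
and show that both terms can be made smaller than $\varepsilon/2$ uniformly in $x$ for all sufficiently large $k$. The first term is handled immediately by the hypothesis that $g_k \to g$ uniformly on $\mathbb{R}^n$: for each $\varepsilon > 0$ there is $K_1$ such that $\sup_{y\in\mathbb{R}^n}|g_k(y)-g(y)| < \varepsilon/2$ for all $k \geq K_1$, and in particular this holds when $y = f_k(x)$.

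For the second term, I would first exhibit a compact set $K \subset \mathbb{R}^n$ that contains $f(S)$ and also $f_k(S)$ for all sufficiently large $k$. Since $f$ is continuous on the compact set $S$, the image $f(S)$ is compact, hence bounded; let $K$ be its closed $1$-neighborhood (still compact). By the uniform convergence of $f_k$ to $f$ on $S$, there exists $K_2$ such that $\|f_k(x) - f(x)\| \leq 1$ for all $x \in S$ and $k \geq K_2$, so $f_k(S) \subset K$ whenever $k \geq K_2$. On the compact set $K$, the continuous function $g$ is uniformly continuous, so there is $\delta > 0$ with $|g(y)-g(y')| < \varepsilon/2$ whenever $y,y' \in K$ and $\|y-y'\| < \delta$.

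Now use uniform convergence of $f_k$ to $f$ once more: pick $K_3 \geq K_2$ such that $\|f_k(x) - f(x)\| < \delta$ for all $x \in S$ and $k \geq K_3$. Then both $f_k(x)$ and $f(x)$ lie in $K$ and are within $\delta$ of each other, hence $|g(f_k(x)) - g(f(x))| < \varepsilon/2$. Taking $k \geq \max(K_1, K_3)$ gives $|g_k(f_k(x)) - g(f(x))| < \varepsilon$ uniformly in $x \in S$, which is the desired conclusion.

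The only mildly delicate step is the second one: the hypothesis supplies only continuity, not uniform continuity, of $g$ on $\mathbb{R}^n$, so we cannot directly apply a modulus-of-continuity bound to the global convergence $f_k(x) \to f(x)$. The key observation is that compactness of $S$ plus uniform convergence forces the images $f_k(S)$ to live eventually inside a single compact subset of $\mathbb{R}^n$, on which uniform continuity is automatic. This is the main conceptual obstacle, and it is resolved by the closed-neighborhood construction above.
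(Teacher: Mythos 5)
Your proof is correct and follows essentially the same route as the paper's: the same triangle-inequality decomposition, the same construction of a compact closed $1$-neighborhood of $f(S)$ to capture $f_k(S)$ for large $k$, and the same appeal to uniform continuity of $g$ on that compact set. The only cosmetic difference is that the paper collapses your $K_2$ and $K_3$ into a single threshold by requiring $\|f_k(x)-f(x)\|\le\min\{1,\delta\}$.
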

\begin{proof}
Let $D \subset \mathbb{R}^n$ be the range of $f$ on $S$. Since $S$ is compact and $f$ is continuous, $D$ is compact. Define $D' = \left\{\mathbf{z} \in \mathbb{R}^n\mid \exists \mathbf{z}_0 \in D, \quad\!\! ||\mathbf{z}-\mathbf{z}_0||_2  \leq 1\right\}$. Note that $D'$ is also compact. Then, since $g$ is continuous, its restriction on $D'$ is uniformly continuous. That is, for any $\epsilon >0$, there exists $\delta > 0$ such that
\begin{equation}
\label{eq::lemma_nesting0}
|g(\mathbf{z}_1) - g(\mathbf{z}_2)| \leq \epsilon/2, \quad \! \forall \mathbf{z}_1, \mathbf{z}_2 \in D', \quad \!\! ||\mathbf{z}_1 - \mathbf{z}_2||_2 \leq \delta.
\end{equation} 
Further, since $f_k$ converges to $f$ uniformly on $S$, there exists $K_1 \in \mathbb{N}$ such that
\begin{equation}
\label{eq::lemma_nesting1}
||f_k(\mathbf{x}) - f(\mathbf{x})||_2 \leq \min \left\{1, \delta \right\}, \quad\! \forall k \geq K_1, \quad\!\! \mathbf{x}\in S.
\end{equation}
By the definition of $D'$, \eqref{eq::lemma_nesting1} implies $f_k(\mathbf{x}) \in D'$ for all $k \geq K_1$ and $\mathbf{x} \in S$. Also, as $g_k$ compactly converges to $g$, $g_k$ uniformly converges to $g$ on $D'$. Then, there exists $K_2 \in \mathbb{N}$ such that $|g_k(\mathbf{z})-g(\mathbf{z})|\leq \epsilon/2$ for all $k \geq K_2$ and $\mathbf{z} \in D'$. Then, for any $k \geq \max\{K_1,K_2\}$ and $\mathbf{x} \in S$, we have
\begin{subequations}
\begin{align}
\left|g_k\left(f_k(\mathbf{x})\right) - g\left(f(\mathbf{x})\right)\right| &\leq 
\left|g_k\left(f_k(\mathbf{x})\right) - g\left(f_k(\mathbf{x})\right)\right|+\left|g\left(f_k(\mathbf{x})\right) - g\left(f(\mathbf{x})\right)\right| \\
\label{eq::lemma_nesting2_1}
&\leq \epsilon/2 + \epsilon/2 \leq \epsilon.
\end{align}
\end{subequations}
We conclude that $g_k\circ f_k$ converges to $g \circ f$ uniformly on $S$.
\end{proof}

Now we come to \textbf{Step 3}, i.e., to show that the property of being a weakly global function is preserved under compact convergence. Lemma \ref{lemma::continuous_activation} and Lemma \ref{lemma::nesting} allow us to approximate a neural network with any continuous activation function by a sequence of neural networks under Assumption \ref{ass::activation}. We show that the weakly global property is preserved under such approximation. 

We first present a modification of a result in \cite{josz2018theory}.
\begin{proposition}\label{prop::compact_convergence}
Consider a sequence of functions $(f_k)_{k\in\mathbb{N}}$ and a function $f$, all from $S \subset \mathbb{R}^m$ to $\mathbb{R}$. If, $f_k \rightarrow f$ compactly and if every $f_k$ is weakly global on $S$, then $f$ is also a weakly global function on $S$.
\end{proposition}
\textbf{Remark}: Proposition \ref{prop::compact_convergence} is slightly different from its original version in \cite{josz2018theory}: here we assume that $f_k$'s are weakly global functions instead of global functions. Nevertheless, we can still prove that $f$ is weakly global by using similar techniques as in \cite{josz2018theory}. We also note that Property PT is NOT preserved under such approximation. A sequence of functions with Property PT may converge compactly to a function with a spurious valley that violates Property PT. For instance\footnote{This example is provided by an anonymous reviewer of the submitted version. We sincerely thank this reviewer for the contribution.}, define $f_k: \mathbb{R}^2\rightarrow\mathbb{R}$ as: $f_k(x_1, x_2)\triangleq |x_1|+\max\{0, 1-2|x_1-1|\}\max\{0, \min\{1, k-|x_2|\}\}$ for $k\in\mathbb{N}$. The sequence $\{f_k\}$ converges compactly to $f(x_1, x_2)\triangleq |x_1|+\max\{0, 1-2|x_1-1|\}$ which contains an unbounded spurious valley $\{(x_1, x_2)|x_1=1.5\}$. Starting from any point in this valley, there are no strictly decreasing paths to global infimum after any sufficiently small perturbation. Thus, Property PT does not holds for $f$.
\begin{proof}[Proof of Proposition \ref{prop::compact_convergence}]
Note that $f_k$ converges compactly towards $f$. Since $S \subset \mathbb{R}^m$ and $\mathbb{R}^m$ is a compactly generated space, it follows that $f$ is continuous. We prove that $f$ is a weakly global function by contradiction.

Suppose that $S_M \subset S$ is a sub-optimal set-wise strict local minimum of $f$. There exists $\epsilon > 0$ such that the uniform neighborhood $V \triangleq \{ \mathbf{y} \in S | \exists\, \mathbf{x} \in S_M: \|\mathbf{x}-\mathbf{y}\|_2 \leq \epsilon \}$ satisfies $f(\mathbf{x}) < f(\mathbf{y})$ for all $\mathbf{x} \in S_M$ and $\mathbf{y} \in V \setminus S_M$. Since $f$ is continuous on the compact set $S_M$, it attains a minimal value on it, say $\inf_{S_M} f \triangleq \alpha + \inf_{S} f$ where $\alpha > 0$ since $S_M$ is not a global minimum. Consider a compact set $V \subset K \subset S$ such that $\inf_{K} f \leq \alpha/2 + \inf_{S} f$. Denote the boundary of $V$ by $\partial V$. Since $\partial V$ is a compact set, $f$ attains a minimal value on it, say $\inf_{\partial V} f \triangleq \beta + \inf_{S_M} f$ where $\beta > 0$ by strict optimality. Let $\gamma \triangleq \min \{ \alpha/2 , \beta \}$. For a sufficiently large value of $k$, compact convergence implies that $|f_k(\mathbf{y})-f(\mathbf{y})| \leq \gamma /3$ for all $\mathbf{y} \in K$. Since the function $f_k$ is continuous on $V$, it attains a minimum, say $\mathbf{z}' \in V$. Consider the compact set defined by $Z \triangleq \{ \mathbf{z} \in V ~|~ f(\mathbf{z}) = f(\mathbf{z}') \}$. Therefore, for any $\mathbf{z}\in Z$,
\begin{subequations}
	\begin{align}
	f_k(\mathbf{z}) &\leq \gamma/3 + \inf_{V} f \leq \beta/3 + \inf_{V} f  < 2\beta/3 + \inf_{V} f  \\
	& \leq - \gamma/3 + \beta + \inf_V f \leq  - \gamma/3 + \inf_{\partial V}f \leq \inf_{\partial V} f_k.
	\end{align}
\end{subequations}
Thus, $\mathbf{z} \in \text{int}(V)$, where $\text{int}(\cdot)$ denotes the interior of a set. So $Z\subseteq \text{int}(V)$. Since both $Z$ and $\partial V$ are compact, we have $d(\partial V, Z)>0$. We now proceed to show by contradiction that $Z$ is a strict local minimum of $f_k$. Assume that for all $\epsilon' > 0$, there exists $\mathbf{y}' \in S \setminus Z$ satisfying $d(\mathbf{y}', Z) \leq \epsilon'$ such that $f_k(\mathbf{z}) \geq f_k(\mathbf{y}')$ for some $\mathbf{z}\in Z$. We can choose $\epsilon'<d(\partial V, Z)$ to guarantee that $\mathbf{y}'$ belongs to $V$ since $Z \subseteq \text{int}(V)$. The point $\mathbf{y}'$ then contradicts the minimality of $Z$ on $V$. This means that $Z \subset V$ is a strict local minimum of $f_k$. Now, observe that for any $\mathbf{z}\in Z$,
\begin{subequations}
\begin{align}
	\inf_K f_k &\leq \gamma/3 + \inf_K f \leq \gamma/3 + \alpha/2 + \inf_S f \leq 2\alpha/3 + \inf_S f < 5\alpha/6 + \inf_S f \\
	&\leq \alpha - \gamma/3 + \inf_S f  = - \gamma/3 + \inf_{S_M} f  = -\gamma/3 + \inf_V f \leq \inf_V f_k \leq f_k(\mathbf{z}).
\end{align} 
\end{subequations}
Thus, $Z$ is not a global minimum of $f_k$. This contradicts the fact that $f_k$ is a weakly global function. Therefore, $f$ must also be a weakly global function.
\end{proof}

Based on Proposition \ref{prop::compact_convergence}, to prove Theorem \ref{thm::continuous} it suffices to find a sequence of weakly global functions that compactly converges to the loss function $E(W)$. Below we present the formal proof of Theorem \ref{thm::continuous}.
\begin{proof}[Proof of Theorem \ref{thm::continuous}]
We denote the considered network by $\mathcal{N}$. From Lemma \ref{lemma::continuous_activation}, for any $1\leq h \leq H$, there exist a sequence of activation functions $(\sigma_{h,k})_{k\in\mathbb{N}}$, each satisfying Assumption \ref{ass::activation} and uniformly converging to $\sigma_h$. For each $k \in \mathbb{N}$, we construct a neural network, referred to as $\mathcal{N}_k$, by replacing the activation function of the $h$-th hidden layer with $\sigma_{h,k}$, $h = 1, 2, \cdots, H$. We assume the training dataset of $\mathcal{N}_k$ to be identical to that of $\mathcal{N}$. We also denote the output matrix of the $h$-th hidden layer by $T^{(k)}_h$ and the empirical loss by $E_k(W) = l(Y, W_{H+1}T^{(k)}_H).$ From Theorem \ref{thm::deep-convex-analytic}, $E_k$ is a weakly global function with respect to $W$, $\forall k\in \mathbb{N}$. In what follows, we prove that the sequence of the empirical loss functions $(E_k)_{k\in \mathbb{N}}$ compactly converges to $E$. 

Consider an arbitrary compact subset $S$ in the space of $W$. For any $W\in S$, define $t^{(k)}_{h,i,n}(W) = (T^{(k)}_h)_{i,n}$ and $t_{h,i,n}(W) = (T_h)_{i,n}$ for any $k \in \mathbb{N}$, $1\leq h \leq H$, $1 \leq i \leq d_h$, and $1 \leq n \leq N$. That is, we rewrite the output of each neuron in the hidden layers as a function of $W$. We prove by induction that every sequence $(t^{(k)}_{h,i,n})_{k\in \mathbb{N}}$ converges to $t_{h,i,n}$ uniformly on $S$.

For $h =1$, we have
\begin{equation*}
t^{(k)}_{1,i,n}(W) = \sigma_{1,k}\left(\sum^{d_0}_{j=1}(W_1)_{i,j}x_{j,n}\right), \quad \!\!
t_{1,i,n}(W) = \sigma_1\left(\sum^{d_0}_{j=1}(W_1)_{i,j}x_{j,n}\right).
\end{equation*}
Since $\sigma_{1,k}$ uniformly converges to $\sigma_1$, $t^{(k)}_{1,j,n}$ also uniformly converges to $t_{1,j,n}$ on $S$ for all $1\leq j\leq d_1$, $1 \leq n \leq N$.

For $h>1$, assume that $t^{(k)}_{h-1,i,n}$ uniformly converges to $t_{h-1,i,n}$ on $S$ for all $1\leq i\leq d_{h-1}$, $1 \leq n \leq N$. For the $h$-th layer, we have
\begin{gather*}
t^{(k)}_{h,i,n}(W) =  \sigma_{h,k}\left(\sum^{d_{h-1}}_{j=1}(W_h)_{i,j}\left(T^{(k)}_{h-1}\right)_{j,n}\right)=\sigma_{h,k}\left(\sum^{d_{h-1}}_{j=1}(W_h)_{i,j} t^{(k)}_{h-1,j,n}(W) \right) \\
t_{h,i,n}(W) = \sigma_h\left(\sum^{d_{h-1}}_{j=1}(W_h)_{i,j}(T_{h-1})_{j,n}\right) =\sigma_h\left(\sum^{d_{h-1}}_{j=1}(W_h)_{i,j} t_{h-1,j,n}(W) \right).
\end{gather*}
By the induction hypothesis, it is easy to show that $\sum^{d_{h-1}}_{j=1}(W_h)_{i,j} t^{(k)}_{h-1,j,n}(W)$ uniformly converges to $\sum^{d_{h-1}}_{j=1}(W_h)_{i,j} t_{h-1,j,n}(W)$ on $S$. Note that $(\sigma_{h,k})_{k\in\mathbb{N}}$ converges to $\sigma_h$ uniformly, and uniform convergence implies compact convergence. It directly follows from Lemma \ref{lemma::nesting} that $t^{(k)}_{h,i,n}(W)$ converges to $t_{h,i,n}(W)$ uniformly on $S$.

Therefore, we conclude that $t^{(k)}_{h,i,n}$ converges to $t_{h,i,n}$ uniformly on $S$ for every $1\leq h \leq H$, $1 \leq i \leq d_h$, and $1 \leq n \leq N$. Then we consider the empirical loss
\begin{equation}
E_k(W) = l\left(Y, W_{H+1}T^{(k)}_H\right), ~~
E(W) = l\left(Y, W_{H+1}T_H\right).
\end{equation}
As every component of $T^{(k)}_H$ converges uniformly to the corresponding component of $T_H$ on $S$, it can be shown that $W_{H+1}T^{(k)}_H$ converges uniformly to $W_{H+1}T_H$ on $S$. By Lemma $\ref{lemma::nesting}$, where we set both $g_k$ and $g$ to the loss function $l(Y,\cdot)$, we have that $E_k$ uniformly converges to $E$ on $S$. Noting that $S$ is an arbitrary compact subset in the space of $W$, the empirical loss $E_k$ converges to $E$ compactly on the space of $W$. Since $E_k(W)$ is a weakly global function for every $k\in \mathbb{N}$, by Proposition \ref{prop::compact_convergence}, $E(W)$ is also a weakly global function. We complete the proof.
\end{proof}

\section{Formal Proof of Negative Results and Computational Construction
of Bad Basins}\label{sec::proof_negative}
\subsection{Proof of Negative Results}
First, we characterize a group of sufficient conditions for the existence of sub-optimal strict local minima in narrow neural networks.
\begin{proposition}
\label{prop::suff_cond_bad_strict}
Consider a $1$-hidden-layer neural network with $m<N$ and twice continuously differentiable activation function $\sigma$. For input data $\mathbf{x} \in \mathbb{R}^N$, define
\begin{subequations}
\label{eq::definition_z}
\begin{align}
    \mathbf{z}_i & \triangleq \sigma(w_i\mathbf{x})\\
    \mathbf{z}'_i & \triangleq \mathbf{x} \circ \sigma'(w_i\mathbf{x})\\
    \mathbf{z}''_i & \triangleq \mathbf{x}\circ\mathbf{x}\circ\sigma''(w_i\mathbf{x})
\end{align}
\end{subequations}
$i=1,2,\cdots, m$. Suppose that there exists $\mathbf{w}\in\mathbb{R}^m$ satisfying the following conditions:
\begin{enumerate}[label={(\arabic*)}]
    \item The vectors $\{\mathbf{z}_j\}^{m}_{j=1}$ are linearly independent,
    \item $\mathbf{z}''_i$ is linearly independent of $\{\mathbf{z}_j\}^{m}_{j=1}\cup \{\mathbf{z}_j'\}^m_{j=1}$, $i = 1,2,\cdots, m$, and
    \item there exists $w_0\in\mathbb{R}$ such that $\mathbf{z}_0 \triangleq \sigma(w_0 \mathbf{x})$ is linearly independent of $\{\mathbf{z}_j\}^m_{j=1}\cup \{\mathbf{z}'_j\}^m_{j=1}$.
\end{enumerate}
Then, there exists a corresponding output data $\mathbf{y}\in\mathbb{R}^N$ such that the empirical loss has a sub-optimal strict local minimum.
\end{proposition}
\begin{proof}
By conditions (2) and (3), there exists a vector $\mathbf{a}\in \mathbb{R}^N$ that is orthogonal to all vectors in $\{\mathbf{z}_j\}^{m}_{j=1}\cup \{\mathbf{z}'_j\}^{m}_{j=1}$, and satisfies $\mathbf{a}^\top \mathbf{z}''_i \not = 0$, $i=1,2,\cdots, m$, $\mathbf{a}^\top \mathbf{z}_0 \not= 0$. We set each entry of $\mathbf{v}$ as $v_i = \mathbf{a}^\top \mathbf{z}''_i/|\mathbf{a}^\top \mathbf{z}''_i|$, $i=1,2,\cdots, m$. Then, we set
\begin{equation}
\label{eq::bad_strict_y_construct}
    \mathbf{y} = \hat{\mathbf{y}} - M\mathbf{a} =  \sum^m_{i=1}v_i \mathbf{z}_i -M\mathbf{a}
\end{equation}
where $M$ is a positive value to be determined later. In the remaining proof, we show that for appropriately chosen $M$, $(\mathbf{v}, \mathbf{w})$ is a sub-optimal strict local minimum of the empirical loss given data $(\mathbf{x}, \mathbf{y})$.

We first show that $(\mathbf{v},\mathbf{w})$ is a strict local minimum for any $M>0$. To see this, we consider a perturbed point $(\mathbf{v}', \mathbf{w}')$ that is sufficiently close to $(\mathbf{v}, \mathbf{w})$. Let $\hat{\mathbf{y}}'$ denote the network output of $(\mathbf{v}', \mathbf{w}')$. By Taylor expansion, We have
\begin{align}
\label{eq::bad_strict_y_difference}
\hat{\mathbf{y}}' - \hat{\mathbf{y}} &= \sum^m_{i=1}(v_i'-v_i)\sigma(w_i \mathbf{x}) + \sum^m_{i=1} v_i'\left[\sigma(w_i' \mathbf{x}) - \sigma(w_i \mathbf{x})\right] \nonumber \\
&=\sum^m_{i=1}  \Delta v_i \mathbf{z}_i + \sum^m_{i=1}(v_i + \Delta v_i)\left[ \Delta w_i \mathbf{z}_i' + \frac{1}{2} \Delta w_i^2 \mathbf{z}_i'' + \mathbf{o}(\Delta w_i^2)\right]
\end{align}
where 
$\Delta v_i \triangleq v_i' - v_i$, $\Delta w_i \triangleq w_i' - w_i$, and $\mathbf{o}(\cdot)$ denotes an infinitesimal vector with
$\lim_{t \rightarrow 0}\|\mathbf{o}(t)\|_2/|t| = 0$.
Next, we decompose the difference of empirical loss as
\begin{align}
\label{eq::bad_strict_loss_decompose}
    E(\mathbf{v}', \mathbf{w}') - E(\mathbf{v} , \mathbf{w}) & = \|\hat{\mathbf{y}}' - \mathbf{y}\|_2^2 - \|\hat{\mathbf{y}} - \mathbf{y}\|_2^2 \nonumber \\
    & = \|\hat{\mathbf{y}}'-\hat{\mathbf{y}}\|^2_2  + 2 (\hat{\mathbf{y}} -\mathbf{y})^\top(\hat{\mathbf{y}}' - \mathbf{\hat{y}}).
\end{align}
Consider the following two cases.

\textbf{Case 1}: $\mathbf{w}' \not= \mathbf{w}$. We have
\begin{subequations}
\begin{align}
\label{eq::bad_strict_case1_1}
(\hat{\mathbf{y}}-\mathbf{y})^\top(\hat{\mathbf{y}}'-\hat{\mathbf{y}}) &= M \mathbf{a}^\top(\hat{\mathbf{y}}'-\hat{\mathbf{y}}) \\
\label{eq::bad_strict_case1_2}
& =  M \sum^m_{i=1}(v_i + \Delta v_i)\left[\frac{1}{2}\Delta w_i^2  \cdot \mathbf{a}^\top \mathbf{z}_i'' + \mathbf{a}^\top \mathbf{o}(\Delta w_i^2)\right]
\end{align}
\end{subequations}
where \eqref{eq::bad_strict_case1_1} follows from \eqref{eq::bad_strict_y_construct}, while \eqref{eq::bad_strict_case1_2} follows from \eqref{eq::bad_strict_y_difference} and the orthogonality between $\mathbf{a}$ and $\{\mathbf{z}_i\}^{m}_{j=1}\cup \{\mathbf{z}'_j\}^{m}_{j=1}$. Note that by our choice of $\mathbf{v}$, $v_i$ is non-zero and is of the same sign with $\mathbf{a}^\top \mathbf{z}_i''$. Thus, for $|\Delta v_i|\leq |v_i|/2$, $v_i + \Delta v_i$ is of the same sign with $v_i$, yielding $(v_i+\Delta v_i)\cdot \mathbf{a}^\top \mathbf{z}_i'' > v_i\cdot  \mathbf{a}^\top \mathbf{z}_i''/2>0$. Also, we have $(v_i+\Delta v_i)\mathbf{a}^\top \mathbf{o}(\Delta w_i^2) \geq -|3v_i\mathbf{a}^\top \mathbf{o}(\Delta w_i^2)/2|$. Noting that there exists at least one $\Delta w_i \not= 0$, we have
\begin{equation}
(\hat{\mathbf{y}}-\mathbf{y})^\top(\hat{\mathbf{y}}'-\hat{\mathbf{y}}) > M \sum_{\Delta w_i \not= 0} \left[\frac{1}{4}\Delta w_i^2 \cdot v_i \cdot \mathbf{a}^\top \mathbf{z}_i'' -\left| \frac{3}{2}v_i\mathbf{a}^\top \mathbf{o}(\Delta w_i^2)\right|\right] > 0
\end{equation}
if all $|\Delta w_i|$'s are sufficiently small. By \eqref{eq::bad_strict_loss_decompose}, we have 
\begin{equation}
E(\mathbf{v}', \mathbf{w}') - E(\mathbf{v} , \mathbf{w}) \geq 2 (\hat{\mathbf{y}} -\mathbf{y})^\top(\hat{\mathbf{y}}' - \mathbf{\hat{y}}) >0.
\end{equation}

\textbf{Case 2}: $\mathbf{w}' = \mathbf{w}$, $\mathbf{v}' \not= \mathbf{v}$. In this case, we have $\Delta w_i = 0$ for all $1\leq i\leq m$, yielding
\begin{equation}
(\hat{\mathbf{y}}-\mathbf{y})^\top(\hat{\mathbf{y}}'-\hat{\mathbf{y}}) =  M \sum^m_{i=1}(v_i + \Delta v_i)\left[\frac{1}{2}\Delta w_i^2  \cdot \mathbf{a}^\top \mathbf{z}_i'' + \mathbf{a}^\top \mathbf{o}(\Delta w_i^2)\right] = 0.
\end{equation}
Thus, we have
\begin{equation}
E(\mathbf{v}', \mathbf{w}') - E(\mathbf{v} , \mathbf{w}) = \|\hat{\mathbf{y}}' - \hat{\mathbf{y}}\|_2^2 = \left\|\sum^m_{i=1} \Delta v_i \mathbf{z}_i\right\|^2_2 >0
\end{equation}
since $\Delta v_i$'s are not all-zero and $\{\mathbf{z}_i\}^m_{i=1}$ are linearly independent.

Combining both cases, we see that $(\mathbf{v},\mathbf{w})$ is a strict local minimum.

Now we consider another point $(\tilde{\mathbf{v}}, \tilde{\mathbf{w}})$ where $\tilde{w}_1 = w_0$ specified in condition (3), and $\tilde{v}_i =0$ for all $i=2,3,\cdots, m$. Denoting the corresponding network output by $\tilde{\mathbf{y}}$, we have
\begin{align}
E(\tilde{\mathbf{v}}, \tilde{\mathbf{w}}) &= \|\tilde{\mathbf{y}} - \mathbf{y}\|_2^2 = \| \tilde{v}_1 \mathbf{z}_0 - \mathbf{\hat{y}}+M\mathbf{a}\|_2^2\nonumber \\
&=  \tilde{v}_1^2\|\mathbf{z}_0\|^2_2 + \|\hat{\mathbf{y}}\|_2^2 +  M^2\|\mathbf{a}\|_2^2 - 2\tilde{v}_1 \cdot \hat{\mathbf{y}}^\top \mathbf{z}_0 + 2\tilde{v}_1 M  \mathbf{a}^\top\mathbf{z}_0 - 2 M\mathbf{a}^\top\hat{\mathbf{y}}.
\end{align}
We write $\tilde{v}_1 = r\cdot M$ where the value of $r$ will be determined later. By our choice of $\mathbf{a}$, we have $\mathbf{a}^\top \mathbf{z}_0\not= 0$ and $\mathbf{a}^\top \hat{\mathbf{y}} = \sum^m_{i=1}v_i \cdot \mathbf{a}^\top \mathbf{z}_i = 0$. Then,
\begin{equation}
\label{eq::bad_strict_suboptimal}
    E(\tilde{\mathbf{v}}, \tilde{\mathbf{w}}) - E(\mathbf{v}, \mathbf{w}) = r(\|\mathbf{z}_0\|^2_2 \cdot r+2\mathbf{a}^\top \mathbf{z}_0) \cdot M^2 -2r\hat{\mathbf{y}}^\top \mathbf{z}_0\cdot M+ \|\hat{\mathbf{y}}\|_2^2
\end{equation}
which is a quadratic function of $M$. Note that $\mathbf{a}^\top \mathbf{z}_0 \not=0$. If $\mathbf{a}^\top \mathbf{z}_0 > 0$, we set $r\in \left(-2(\mathbf{a}^\top \mathbf{z_0})/\|\mathbf{z}_0\|^2_2,0\right)$; otherwise, we set $r\in \left(0,-2(\mathbf{a}^\top \mathbf{z_0})/\|\mathbf{z}_0\|^2_2\right)$. Then 
\begin{equation}
    r(\|\mathbf{z}_0\|^2_2 \cdot r+2\mathbf{a}^\top \mathbf{z}_0) < 0.
\end{equation}
That is, the coefficient of $M^2$ in \eqref{eq::bad_strict_suboptimal} is negative. For sufficiently large $M>0$, we have $E(\tilde{\mathbf{v}}, \tilde{\mathbf{w}}) - E(\mathbf{v}, \mathbf{w}) < 0$, implying that the strict local minimum $(\mathbf{v},\mathbf{w})$ is indeed sub-optimal.
\end{proof}

Note that a sub-optimal strict local minimum always entails a bad basin. Based on Proposition \ref{prop::suff_cond_bad_strict}, we can prove Theorem \ref{thm::bad_basin_exists} and \ref{thm::bad_basin_smooth} by constructing neural networks satisfying conditions (1)-(3). The proof of Proposition \ref{prop::suff_cond_bad_strict} also suggests a concrete approach to construct a sub-optimal strict local minimum. In the next subsection, we show how to utilize this approach to computationally construct a sub-optimal basin.

Now, we present the formal proofs of negative results.

\begin{proof}[Proof of Theorem \ref{thm::bad_basin_exists}]
Since $\mathbf{x}$ has no zero entries and no identical entries, we can always find a weight vector $\mathbf{w}\in \mathbb{R}^m$ (without zero entries and identical entries)  such that $w_i x^{(j)} \not= w_{i'} x^{(j')}$ for any $i\not= i'$ or $j \not= j'$.

Subsequently, we design an appropriate activation function $\sigma$ such that the conditions in Proposition \ref{prop::suff_cond_bad_strict} hold. Since $w_i x^{(j)}$'s are different from each other, we can arbitrarily design (each entry of) $\mathbf{z}_i$, $\mathbf{z}'_i$, $\mathbf{z}''_i$. Specifically, for any $\mathbf{p}^{(i)}, \mathbf{q}^{(i)}, \mathbf{r}^{(i)}\in \mathbb{R}^N$, $i=1,2,\cdots, m$, there exists a twice continuously differentiable function $\sigma(\cdot)$ (e.g., a high-order polynomial) such that
\begin{equation}
\sigma(w_i x_j) = \mathbf{p}^{(i)}_j, ~~ \sigma'(w_i x_j) = \frac{\mathbf{q}^{(i)}_j}{x_j}, ~~ \sigma''(w_i x_j) = \frac{\mathbf{r}^{(i)}_j}{x_j^2}, ~~ j=1,2,\cdots, N,
\end{equation}
implying $\mathbf{z}_i = \mathbf{p}^{(i)}$, $\mathbf{z}'_i = \mathbf{q}^{(i)}$, $\mathbf{z}''_i = \mathbf{r}^{(i)}$. Similarly, we can arbitrarily design $\mathbf{z}_0=\sigma(w_0\mathbf{x})$ as long as $w_0$ is different from any entry of $\mathbf{w}$. We first let $\{\mathbf{z}_j\}^{m}_{j=1}\cup \{\mathbf{z}'_j\}^{m}_{j=1}$ span a linear subspace with dimension less than $N$, and then design $\mathbf{z}''_i$, $w_0$ to satisfy conditions (1)-(3) in Proposition \ref{prop::suff_cond_bad_strict}.

Finally, by Proposition \ref{prop::suff_cond_bad_strict} there exists $\mathbf{y} \in \mathbb{R}^N$ such that the empirical loss has a sub-optimal strict local minimum, and hence has a bad basin.
\end{proof}

\begin{proof}[Proof of Theorem \ref{thm::homogeneous_no_bad_basin}]
Assume on the contrary that the empirical loss has a set-wise strict local minimum $S$. By Definition \ref{def_1} $S$ is a compact set. Consider an arbitrary $(\mathbf{v}, \mathbf{w})\in S$. We distinguish between the following two cases.

\textbf{Case 1}: $\mathbf{v}=\mathbf{0}$. For any $\Delta \mathbf{w}$ and  $\mathbf{w}' = \mathbf{w} + \Delta \mathbf{w}$, we have 
\begin{equation}
\label{eq::homogenuous_case_1}
    \hat{\mathbf{y}}'^\top = \mathbf{v}^\top\sigma(\mathbf{w}'\mathbf{x}^\top) = \mathbf{0}^\top = \mathbf{v}^\top\sigma(\mathbf{w}\mathbf{x}^\top) = \hat{\mathbf{y}}^\top.
\end{equation}
yielding $E(\mathbf{v}, \mathbf{w}) = E(\mathbf{v}, \mathbf{w}')$. The line segment between $(\mathbf{v},\mathbf{w})$ and $(\mathbf{v},\mathbf{w}')$ is a continuous path with constant empirical loss. Thus $(\mathbf{v}, \mathbf{w}') \in S$. Note that \eqref{eq::homogenuous_case_1} holds for arbitrary $\Delta \mathbf{w}$, the norm of $\mathbf{w}'$ can be made arbitrarily large, and hence $S$ cannot be a compact set, a contradiction!

\textbf{Case 2}: $\mathbf{\mathbf{v}}\not=\mathbf{0}$. For any $M>0$, consider $(\mathbf{v}', \mathbf{w}') = (M\cdot \mathbf{v}, \mathbf{w}/M)$. By assumption, we have
\begin{equation}
    \hat{\mathbf{y}}'^\top =  M\cdot \mathbf{v}^\top\sigma\left(\frac{1}{M}\cdot\mathbf{w}'\mathbf{x}^\top\right) = \mathbf{v}^\top\sigma(\mathbf{w}\mathbf{x}^\top) = \hat{\mathbf{y}}^\top.
\end{equation}
yielding $E(\mathbf{v}, \mathbf{w}) = E(\mathbf{v}, \mathbf{w}')$. By continuously changing $M$, we can obtain a continuous path between $(\mathbf{v}, \mathbf{w})$ and $(\mathbf{v}', \mathbf{w}')$. Thus $(\mathbf{v}', \mathbf{w}') \in S$. Note that $\mathbf{v}'$ can be arbitrarily scaled, $\|\mathbf{v}'\|_2$ can be made arbitrarily large, a contradiction to the compactness of $S$.

Therefore, the empirical loss has no set-wise strict local minimum.

Note that the proof can be easily extended to an arbitrarily deep network. In fact, if the empirical loss of a deep network with positive homogeneous activation has a set-wise strict local minimum, we can manipulate the weights of the last hidden layer to show a contradiction, following the same idea as above.
\end{proof}

\begin{proof}[Proof of Theorem \ref{thm::bad_basin_smooth}]
Note that the considered network has only 1 hidden neuron. For simplicity, we denote the network weights by $v,w\in\mathbb{R}$. Similarly to \eqref{eq::definition_z}, we define $\mathbf{z} \triangleq \sigma(w\mathbf{x})$, $\mathbf{z}' \triangleq \mathbf{x}\circ\sigma'(w\mathbf{x})$, and $\mathbf{z}'' \triangleq \mathbf{x}\circ \mathbf{x}\circ \sigma''(w\mathbf{x})$.

We set $w=0$. Since $\mathbf{z} \not= \mathbf{0}$, condition (1) of Proposition \ref{prop::suff_cond_bad_strict} holds. Next, consider the following matrix
\begin{equation}
    Z\triangleq \begin{pmatrix}
    \mathbf{z} \\
    \mathbf{z}' \\
    \mathbf{z}''
    \end{pmatrix} = 
    \begin{pmatrix}
    \sigma(0) & \sigma(0) & \cdots & \sigma(0) \\
    x^{(1)} \sigma'(0) & x^{(2)} \sigma'(0) & \cdots & x^{(N)}\sigma'(0) \\
    (x^{(1)})^2\sigma''(0) & (x^{(2)})^2\sigma''(0) & \cdots & (x^{(N)})^2\sigma''(0)
    \end{pmatrix} \in \mathbb{R}^{3\times N}.
\end{equation}
Note that $Z$ is a sub-matrix of an $N\times N$ Vandermonde matrix (with each row multiplied by a non-zero scalar), and therefore has full row rank. Thus, condition (2) of Proposition \ref{prop::suff_cond_bad_strict} holds.

Consider $\mathbf{z}_0 =\sigma(w_0\mathbf{x})$ with $w_0\not=0$. Note that $\sigma'(0) \not=0$, $\sigma(t)$ is not constant around $t=0$. Since $\mathbf{x}$ has no identical entries, $\mathbf{z_0}$ is not aligned with $\mathbf{z}=\sigma(0)\cdot \mathbf{1}$ for $w_0$ sufficiently close to zero. Further, note that $\lim_{w_0 \rightarrow 0} \mathbf{z}_0 = \mathbf{z}$. Since $\mathbf{z}$ is linearly independent of $\mathbf{z}'$, $\mathbf{z}_0$ is also linearly independent of $\mathbf{z}'$ for $w_0$ sufficiently close to zero. Therefore, condition (3) of Proposition \ref{prop::suff_cond_bad_strict} holds. 

By Proposition \ref{prop::suff_cond_bad_strict}, there exists $\mathbf{y}\in \mathbb{R}^N$ such that the considered network has a sub-optimal basin.
\end{proof}

\begin{proof}[Proof of Theorem \ref{thm::bad_basin_piecewise}]By the assumption $\sigma(0) = \sigma'(0)=0$, the activation function is ``flat'' around the origin with zero function value. Note that $\sigma$ is piecewise linear. Let $t_0\not=0$ be a nearest breakpoint (at which the slope of the function value changes) to the origin. Then, there exist $a \not=0$ and $\delta>0$ such that
\begin{equation}
    \sigma(t) = \begin{cases}
    0, & t\in[0,t_0]\\
    a(t - t_0), & t\in(t_0,t_0 + \delta]
    \end{cases}
\end{equation}
if $t_0>0$, and
\begin{equation}
    \sigma(t) = \begin{cases}
    a(t_0 - t), & t\in[t_0-\delta,t_0)\\
    0, & t\in[t_0,0]
    \end{cases}
\end{equation}
if $t_0<0$. Without loss of generality, we assume $a,t_0>0$. Otherwise, we can reverse the signs of the constructed weights correspondingly and apply the same analysis; see the last paragraph of this proof.

\textbf{Step 1 (Construction)}: Consider input data $\mathbf{x} \in \mathbb{R}^N$ satisfying $x_1>x_2>\cdots>x_N$ with $x_1/x_N < (t_0+\delta)/t_0$. Note that all such $\mathbf{x}$'s constitute a positive-measure set, as we can slightly perturb $\mathbf{x}$ without violating the above condition.

For any $0 \leq i \leq N$, we say a vector $\mathbf{a}\in\mathbb{R}^N$ has pattern-$i$ if its first $i$ entries are all positive and the remaining entries are all zero. Next, define $\mathcal{P}_i$ as the set of all $N$-dimensional vectors that have pattern-$i$. We further define
\begin{equation}
    u_i = \frac{t_0}{x_{i}}, ~~ 1\leq i\leq N, ~~u_{N+1} = \frac{t_0+\delta}{x_1}
\end{equation}
Then, from our choice of $\mathbf{x}$, we have $u_1<u_2<\cdots<u_N < u_{N+1}$. Further, for any $1\leq n \leq N$ we have
\begin{subequations}
\label{eq::property_ui}
\begin{equation}
    u_ix_n = \frac{t_0x_n}{x_i}\in \begin{cases}
    (t_0,t_0+\delta) & n<i \\
    (0,t_0) & n\geq i
    \end{cases}, ~~ 1\leq i \leq N
\end{equation}
an
\begin{equation}
    u_{N+1}x_n = \frac{(t_0+\delta)x_n}{x_1}\in (t_0,t_0+\delta].
\end{equation}
\end{subequations}

Consider a weight configuration $(\mathbf{v}, \mathbf{w})$ with
\begin{equation}
w_i = u_{2i+1}, ~~ v_i = 1, ~~ i=1,2,\cdots, m
\end{equation}
From \eqref{eq::property_ui}, we have that $\mathbf{z}_i\triangleq \sigma(w_i\mathbf{x})$ has pattern-($2i$) for any $i=1,2,\cdots, m$. Then, we set the output data $\mathbf{y}$ as
\begin{equation}
    \mathbf{y} = \hat{\mathbf{y}} - M\mathbf{e}_{2m+1} =  \sum^m_{i=1}v_i\mathbf{z}_i - M\mathbf{e}_{2m+1}
\end{equation}
where $M$ is a positive value to be determined later, and $\mathbf{e}_{2m+1}\in\mathbb{R}^N$ is the $(2m+1)$-th standard basis vector of $\mathbb{R}^N$ (noting that $2m+1\leq N$ by assumption). Then the epirical loss is given by
\begin{equation}
    E(\mathbf{v}, \mathbf{w}) = \|\hat{\mathbf{y}} - \mathbf{y}\|^2_2 = M^2.
\end{equation}

In what follows, we will show that $(\mathbf{v}, \mathbf{w})$ is a sub-optimal strict local minimum of the empirical loss.

\textbf{Step 2 (Local-Min)}: Let $(\mathbf{v}', \mathbf{w}')$ be another weight configuration in the neighbourhood of the constructed $(\mathbf{v}, \mathbf{w})$. 
Denote the output vectors of the hidden neurons and the output neuron by $\mathbf{z}'_i \triangleq \sigma(w'_i\mathbf{x})$ and $(\hat{\mathbf{y}}')^\top = (\mathbf{v}')^\top \sigma(\mathbf{w}'\mathbf{x}^\top)$, respectively. We assume that $\mathbf{w}'$ is sufficiently close to $\mathbf{w}$ such that $u_{2i} < w'_i < u_{2i+2}$ for all $i=1,2,\cdots, m$. From \eqref{eq::property_ui} we have $\mathbf{z}'_i \in \mathcal{P}_{2i}\cup \mathcal{P}_{2i+1}$. We also assume that $v_i'>0$ for all $i=1,2,\cdots, m$.

We discuss the following 6 cases.

\textbf{(i)} If $w'_m>w_m$, then $\mathbf{z}'_m$ has pattern-$(2m+1)$. Note that $\mathbf{z}'_i \in \mathcal{P}_{2i}\cup \mathcal{P}_{2i+1}$ for all $i<m$. Therefore, $\hat{\mathbf{y}}'\in \mathcal{P}_{2m+1}$, and $\hat{y}'_{2m+1} = v'_{m}\sigma(w'_m x_{2m+1}) > 0$. Thus,
\begin{equation}
    E(\mathbf{v}',\mathbf{w}') \geq \|\hat{y}'_{2m+1}-y_{2m+1}\|^2_2 = \|\hat{y}'_{2m+1} + M\|^2_2 > M^2 = E(\mathbf{v},\mathbf{w}).
\end{equation}

\textbf{(ii)} If $w'_m<w_m$, then $\mathbf{z}'_m$ has pattern-$2m$. Since $\mathbf{z}'_i\in \mathcal{P}_{2i}\cup \mathcal{P}_{2i+1}$ for all $i<m$ and $w'_m>u_{2m}$, we have
\begin{equation}
\label{eq::piece_basin_proof_case2}
    \frac{\hat{y}'_{2m}}{\hat{y}'_{2m-1}} = \frac{v'_m\sigma(w'_m x_{2m})}{\sum^m_{i=1} v'_i\sigma(w'_i x_{2m-1})} \leq \frac{\sigma(w'_m x_{2m})}{\sigma(w'_m x_{2m-1})}= \frac{w'_m x_{2m} - t_0}{w'_m x_{2m-1}- t_0}.
\end{equation}
Since $x_{2m}<x_{2m-1}$, the right hand side of \eqref{eq::piece_basin_proof_case2} is strictly increasing for $w'_m \in (u_{2m},w_m)$. We have
\begin{equation}
    \frac{\hat{y}'_{2m}}{\hat{y}'_{2m-1}} <\frac{w_m x_{2m} - t_0}{w_m x_{2m-1}- t_0} = \frac{\hat{y}_{2m}}{\hat{y}_{2m-1}},
\end{equation}
implying $\hat{\mathbf{y}}' \not= \hat{\mathbf{y}}$. Noting that $\hat{\mathbf{y}}', 
\hat{\mathbf{y}}\in \mathcal{P}_{2m}$, we have
\begin{equation}
    E(\mathbf{v}',\mathbf{w}') = \sum^{2m}_{i=1}\|\hat{y}'_i- \hat{y}_i\|_2^2 + M^2 > M^2 =  E(\mathbf{v},\mathbf{w}).
\end{equation}

\textbf{(iii)} If $w'_m=w_m$ and $v'_m\not=v_m$, then $\hat{\mathbf{y}}\in \mathcal{P}_{2m}$. Further,
\begin{equation}
    \hat{y}'_{2m} = v'_m \sigma(w_m x_{2m}) \not= v_m \sigma(w_m x_{2m}) = \hat{y}_{2m}.
\end{equation}
We have
\begin{equation}
    E(\mathbf{v}',\mathbf{w}') \geq \|\hat{y}'_{2m}- \hat{y}_{2m}\|_2^2 + M^2 > M^2 =  E(\mathbf{v},\mathbf{w}).
\end{equation}

\textbf{(iv)} If there exists $1\leq i<m$ such that $(v'_j,w'_j) = (v_j,w_j)$ for all $j> i$, and $w'_i >w_i$, then $\mathbf{z}'_i \in \mathcal{P}_{2i+1}$. We have
\begin{equation}
    \hat{y}'_{2i+1} = v'_i \sigma(w'_i x_{2i+1}) + \sum^m_{j=i+1} v_j \sigma(w_j x_{2i+1}) = v'_i \sigma(w'_i x_{2i+1}) + \hat{y}_{2i+1} > \hat{y}_{2i+1}. 
\end{equation}
Noting that $\hat{\mathbf{y}}', 
\hat{\mathbf{y}}\in \mathcal{P}_{2m}$, we have
\begin{equation}
    E(\mathbf{v}',\mathbf{w}') \geq \|\hat{y}'_{2i+1}- \hat{y}_{2i+1}\|_2^2 + M^2 > M^2 =  E(\mathbf{v},\mathbf{w}).
\end{equation}

\textbf{(v)} If there exists $1\leq i<m$ such that $(v'_j,w'_j) = (v_j,w_j)$ for all $j> i$, and $w'_i < w_i$, then $\mathbf{z}'_i \in \mathcal{P}_{2i}$. We have
\begin{align}
    \frac{\hat{y}'_{2i}- \sum^m_{j=i+1}v_j\sigma(w_j x_{2i})}{\hat{y}'_{2i-1}- \sum^m_{j=i+1}v_j\sigma(w_j x_{2i-1})} &= \frac{v'_i \sigma(w_i' x_{2i})}{v'_i \sigma(w_i' x_{2i-1}) + \sum^{i-1}_{k=1}v'_k \sigma(w_k' x_{2i-1})} \nonumber \\
    &\leq \frac{\sigma(w_i' x_{2i})}{ \sigma(w_i' x_{2i-1})} = \frac{w_i' x_{2i} - t_0}{w_i' x_{2i-1}-t_0}\nonumber \\
    &< \frac{w_i x_{2i} - t_0}{w_i x_{2i-1}-t_0} \nonumber \\
    &=\frac{\hat{y}_{2i}- \sum^m_{j=i+1}v_j\sigma(w_j x_{2i})}{\hat{y}_{2i-1}- \sum^m_{j=i+1}v_j\sigma(w_j x_{2i-1})},
\end{align}
implying $(\hat{y}'_{2i-1}, \hat{y}'_{2i})\not=(\hat{y}_{2i-1}, \hat{y}_{2i})$. Noting that $\hat{\mathbf{y}}', 
\hat{\mathbf{y}}\in \mathcal{P}_{2m}$, we have
\begin{equation}
    E(\mathbf{v}',\mathbf{w}') \geq \|\hat{y}'_{2i-1}- \hat{y}_{2i-1}\|_2^2 + \|\hat{y}'_{2i}- \hat{y}_{2i}\|_2^2 + M^2 > M^2 =  E(\mathbf{v},\mathbf{w}).
\end{equation}

\textbf{(vi)} If there exists $1\leq i<m$ such that $(v'_j,w'_j) = (v_j,w_j)$ for all $j> i$, and $w'_i = w_i$, $v'_i\not= v_i$, then $\mathbf{z}'_j \in \mathcal{P}_{2j}$ for all $j\geq i$. In this case we have
\begin{equation}
    \hat{y}'_{2i} =v'_i\sigma(w_i x_{2i}) + \sum^m_{j=i+1}v_j\sigma(w_j x_{2i}) \not= v_i\sigma(w_i x_{2i}) + \sum^m_{j=i+1}v_j\sigma(w_j x_{2i}) =  \hat{y}_{2i}.
\end{equation}
Therefore, 
\begin{equation}
    E(\mathbf{v}',\mathbf{w}') \geq \|\hat{y}'_{2i}- \hat{y}_{2i}\|_2^2 + M^2 > M^2 =  E(\mathbf{v},\mathbf{w}).
\end{equation}

Combining (i)-(vi), we see that $(\mathbf{v},\mathbf{w})$ is a strict local minimum of $E$.

\textbf{Step 3 (Sub-Optimality)}: Now consider a weight configuration $(\mathbf{v}^*, \mathbf{w}^*)$ with $w^*_m \in (u_{2m+1}, u_{2m+2})$, $v^*_m = -\alpha M $, and $w_i^*=v^*_i =0$ for all $1\leq i<m$. Here $\alpha$ is a positive value to be determined later. Then, the corresponding network output satisfies
\begin{equation}
    -\hat{\mathbf{y}}^* = \alpha M \sigma(w^*_m \mathbf{x}) \in \mathcal{P}_{2m+1}.
\end{equation}
We have
\begin{align}
\label{eq::piece_basin_proof_subopt}
    &~E(\mathbf{v}^*,\mathbf{w}^*)-E(\mathbf{v},\mathbf{w}) \nonumber \\
    =&~\|\hat{\mathbf{y}} - M\mathbf{e}_{2m+1} +\alpha M \sigma(w^*_m\mathbf{x}) \|_2^2 -M^2 \nonumber \\
     =&~ \sum^{2m}_{i=1}\left|\hat{y}_i+\alpha M \sigma(w^*_m x_i)\right|_2^2 + M^2\left|\alpha \sigma(w^*_m x_{2m+1})-1\right|^2_2 -M^2
\end{align}
which can be seen as a quadratic function of $M$ with the quadratic coefficient given by
\begin{equation}
    c_2 = \left[\sum^{2m}_{i=1}\sigma^2(w^*_m x_i) + \sigma^2(w^*_m x_{2m+1})\right]\alpha^2 - 2\sigma(w^*_m x_{2m+1})\alpha.
\end{equation}
We let
\begin{equation}
    0< \alpha < \frac{2\sigma(w^*_m x_{2m+1})}{\sum^{2m}_{i=1}\sigma^2(w^*_m x_i) + \sigma^2(w^*_m x_{2m+1})},
\end{equation}
then the quadratic coefficient of $M$ in \eqref{eq::piece_basin_proof_subopt} is negative. Thus, for sufficiently large $M$, \begin{equation}
    E(\mathbf{v}^*,\mathbf{w}^*)-E(\mathbf{v},\mathbf{w})<0.
\end{equation}
As a result, $(\mathbf{v},\mathbf{w})$ is a sub-optimal strict local minimum. Since a sub-optimal strict local minimum always entails a sub-optimal basin, we complete the proof.

Finally, we note that if $t_0<0$ and $a>0$, we can set $\tilde{\mathbf{w}} = -\mathbf{w}$ and show that $(\mathbf{v}, \tilde{\mathbf{w}})$ is a sub-optimal strict local minimum by the same analysis. If $t_0>0$ and $a<0$, we can set $\tilde{\mathbf{v}} = -\mathbf{v}$ and show that $(\tilde{\mathbf{v}}, \mathbf{w})$ is a sub-optimal strict local minimum. Moreover, if $t_0,a_0<0$, we can show that $(\tilde{\mathbf{v}}, \tilde{\mathbf{w}})$ is a sub-optimal strict local minimum.
\end{proof}

\subsection{Computational Construction of Sub-Optimal Basins}
Theorem \ref{thm::bad_basin_exists} is an existential statement in which we only care about the existence of a certain network that possesses a bad basin. One might be interested in finding a bad basin for a specific neural network and specific input data. In fact, our proof of Proposition \ref{prop::suff_cond_bad_strict} suggests a general computational approach to construct bad basins. In the following, we provide a concrete example for Swish activation.

\begin{example}[Bad basin for Swish network]\label{ex::swish}
Consider a $1$-hidden-layer single neuron network with Swish activation $\sigma(t)=t/(1+e^{-t})$. The network output can thus be represented as $\hat{\mathbf{y}}^\top=v \sigma(w\cdot \mathbf{x}^\top)$, $v,w \in \mathbb{R}$. We consider three data samples $\mathbf{x} = (x^{(1)}, x^{(2)}, x^{(3)})^\top=(0.1, 2, 3)^\top$ and choose $w=v=1$. It can be verified that conditions (1)-(3) in Proposition \ref{prop::suff_cond_bad_strict} are all satisfied. Then, according to the construction approach in the proof of Proposition \ref{prop::suff_cond_bad_strict}, we aim to find a vector $\mathbf{a}=(a_1, a_2, a_3)^\top$ such that 
\begin{subequations}
\label{eq::swish_example}
    \begin{align}
        \mathbf{a}^\top \mathbf{z} = &\sum_{n=1}^3a_n\cdot\frac{wx^{(n)}}{1+e^{-wx^{(n)}}}=0\\
        \mathbf{a}^\top \mathbf{z}'= &\sum_{n=1}^3a_nx^{(n)}\cdot\frac{1+(1+wx^{(n)})e^{-wx^{(n)}}}{(1+e^{-wx^{(n)}})^2}=0\\
        \mathbf{a}^\top \mathbf{z}''= &\sum^3_{n=1}a_n\cdot\frac{(2-wx^{(n)}+2e^{-wx^{(n)}}+wx^{(n)}e^{-wx^{(n)}})e^{-wx^{(n)}}}{(1+e^{-wx^{(n)}})^3}>0.
    \end{align}
\end{subequations}
We computationally find
a feasible solution $\mathbf{a}\approx(1, 0.0294, -0.0365)^\top$ to \eqref{eq::swish_example}. Then, we set $\mathbf{y}=v\sigma(w\cdot \mathbf{x})-100\mathbf{a} \approx(-99.9475, -1.1766, 6.5060)^\top$. In this way, we find a neural network with a sub-optimal strict local minimum $(v, w)=(1, 1)$. The loss surface of the network is visualized in Figure \ref{fig:swishex}.
\end{example}

\begin{figure}
    \centering
    \subfigure{\label{fig:swishex:subfig:a}
    \includegraphics[width=0.45\linewidth]{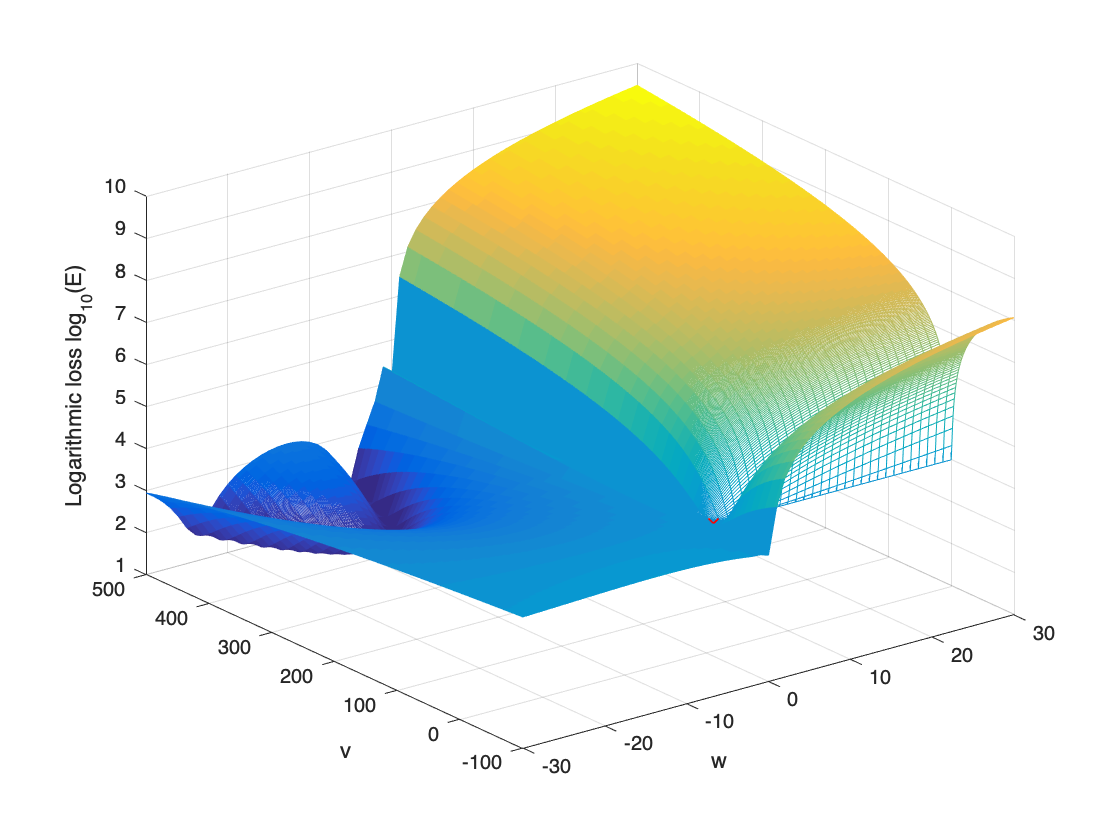}}
    \subfigure{\label{fig:swishex:subfig:b}
    \includegraphics[width=0.45\linewidth]{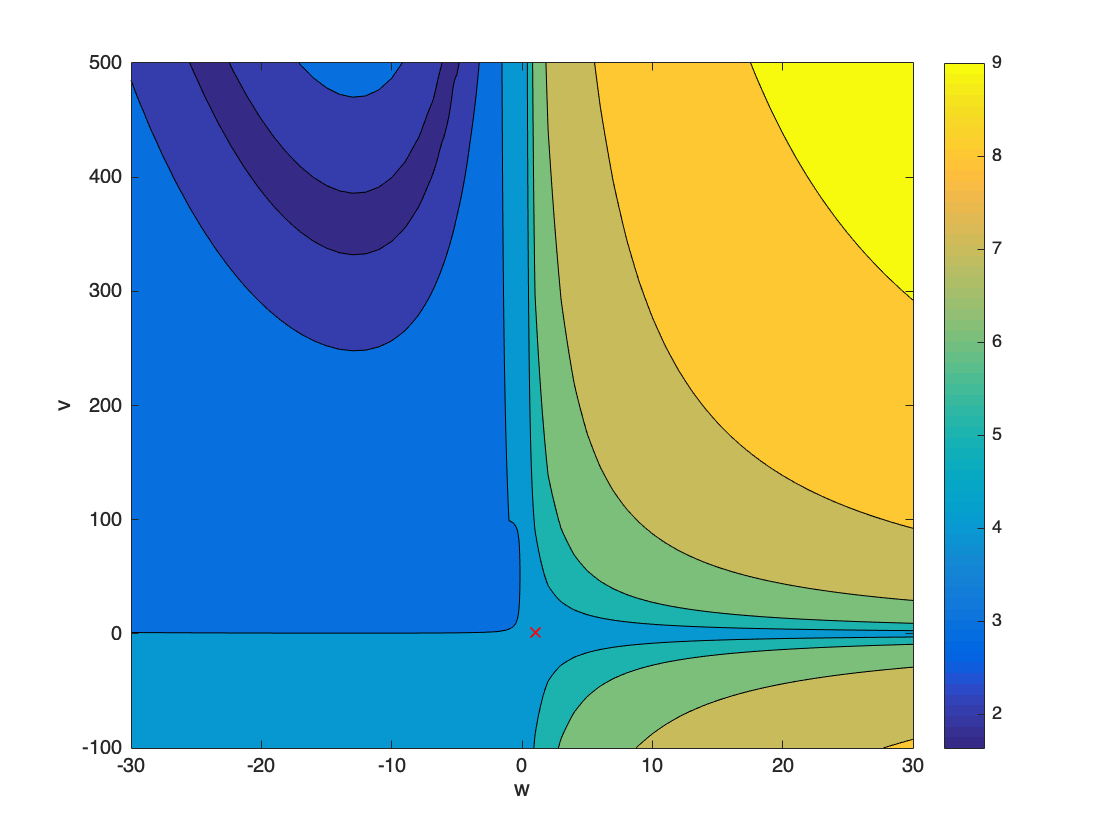}}
    \subfigure{\label{fig:swishex:subfig:c}
    \includegraphics[width=0.45\linewidth]{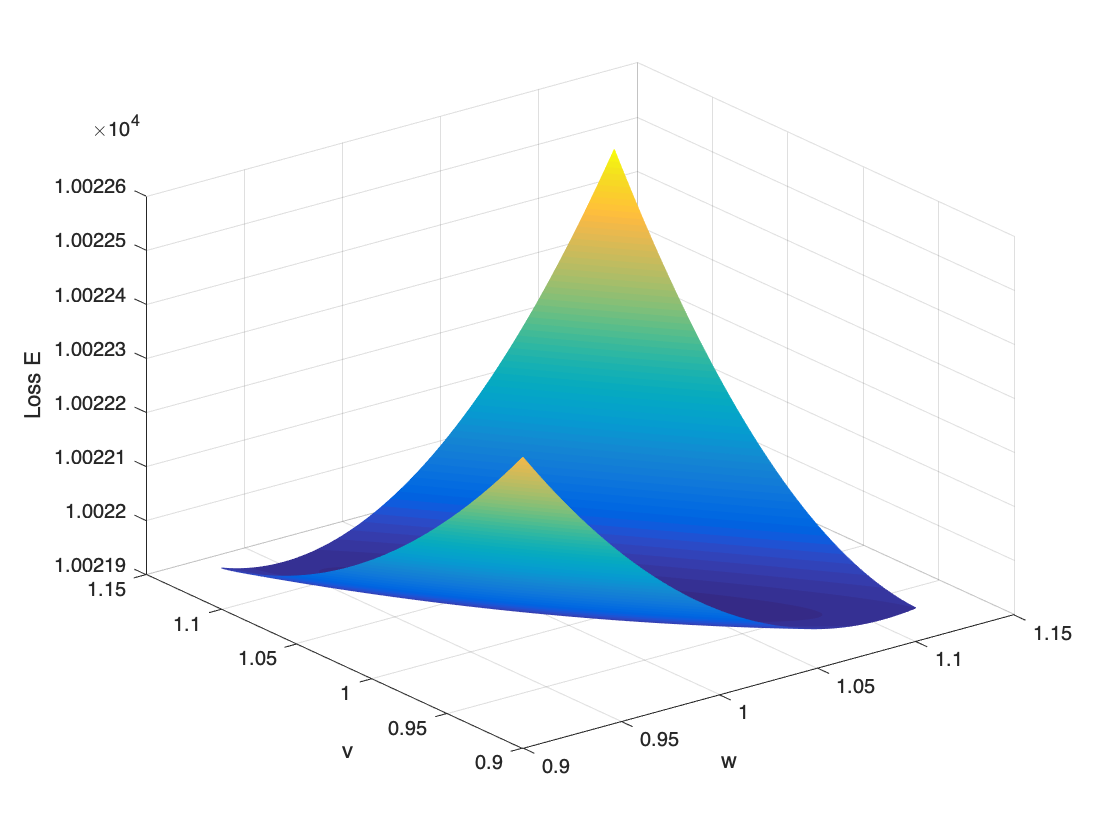}}
    \subfigure{\label{fig:swishex:subfig:d}
    \includegraphics[width=0.45\linewidth]{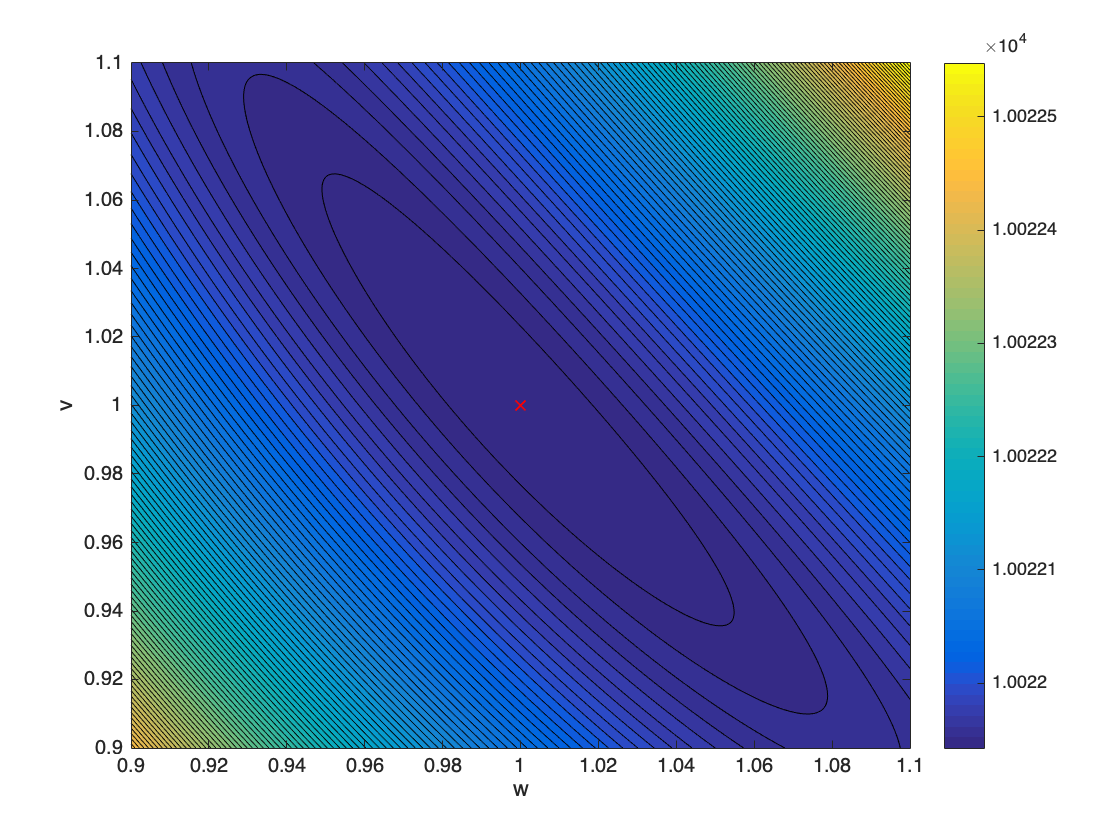}}
    \caption{The loss surface in Example \ref{ex::swish}. Figures \ref{fig:swishex:subfig:a} and \ref{fig:swishex:subfig:b} show the loss surface and the contour map in $[-30, 30]\times[-100, 500]$, respectively. The bad strict local minimum $(w, v)=(1, 1)$ is marked by a red cross. Figure \ref{fig:swishex:subfig:c} and \ref{fig:swishex:subfig:d} show the loss surface and the contour map respectively in a smaller region around the local minimum we find.}
    \label{fig:swishex}
\end{figure}

\textbf{Remark:} In both Theorem \ref{thm::bad_basin_smooth} and Example \ref{ex::swish}, we design specific output data $\mathbf{y}$ to ensure the existence of bad basins with given activation functions. If both the activation and the output data are fixed, it remains an open question whether bad basins exist for general narrow networks. For this question, \cite{christof2020stability} provides a possible direction. In particular, Corollary 5.14 of \cite{christof2020stability} claims that, under mild assumptions on the activation functions, the appearance of spurious valleys for certain choices of the training data is virtually unavoidable for deep networks if there exist vectors that are not realizable. An extension to a similar result regarding bad basins is an interesting direction to explore.

\section{Conclusions}\label{sec::conclusions}
In this paper, we studied the loss surface of wide neural networks. We showed that if the last hidden layer has no fewer neurons than the number of samples, for any continuous activation function, the loss function is weakly global, i.e., the loss landscape has no set-wise strict local minima (bad basins). We also showed that for most analytic activation functions, starting from any point, after a small perturbation there exists a strictly decreasing path to the global infimum. On the other hand, we construct examples to show that sub-optimal strict local minima can exist for a 1-hidden-layer narrow network with a large class of activation functions. Together, our results show a phase transition from narrow to wide networks in a broad setting of activations. Note that we did not establish phase transitions for all activations.

Large width does not necessarily lead to a perfect landscape for deep neural networks. Nevertheless, our work shows that large width can benefit the landscape at least by eliminating bad basins (set-wise strict local minima). It is an interesting future direction to study the other benefits of width. We hope this work can shed light on the training difficulties of small networks, which is an important and very challenging problem.

\appendix

\section{More Discussions}
\label{appen::discussions}
\textbf{Network pruning suggests that optimization is a challenge.}
We mentioned in the introduction that the findings of network pruning \cite{han2015deep} suggest that the higher error of training small networks is largely due to optimization issues, but not the lack of representation power. We elaborate below. First, network pruning papers showed that most parameters of a well-trained large network can be pruned without affecting the final training and test error too much. This implies that a small network can also fit the training data well. Second, such a small network inherits the weights from the well-trained large network, and training a small network from a random initial point often leads to significantly worse performance. In a word, the global optimal value of the small network problem is quite good, but finding the global optimum by an optimization algorithm is hard. Thus the large error of training a small network (still with a sufficient number of parameters to ensure the representation power) is mainly due to optimization issues that it is hard to find global-min. Of course if the number of parameters is very small (say, only one parameter), then the large error is due to representation power, but we are interested in a certain regime that the network has enough parameters to fit data but not enough parameters to have
good optimization properties. 

\textbf{Practical motivation: understanding small networks.}
We discussed before that one motivation of our work is to understand small networks; here we provide more discussions. Training small networks becomes a hot topic due to a few reasons. First, only small models can be implemented on small devices such as mobile phones and Internet-of-Things (IoT) devices. The so-called edge computing also calls for small models on small devices. Second, many current models require a large amount of GPUs (e.g. BigGAN \cite{brock2018large} requires 8 V100 GPUs to train 15 days), thus smaller models are more accessible to resources-constrained researchers. This problem attracted a great deal of attention from practitioners (e.g. \cite{sandler2018mobilenetv2,tan2019efficientnet,yu2018slimmable} \cite{han2015deep} \cite{frankle2019lottery}).

Unlike other practical challenges in deep learning that were quickly resolved by practitioners (e.g. training networks with more than 30 layers was a major challenge after the initial success of AlexNet in 2012, but in 2015 \cite{he2016deep} proposed ResNet that can train 1000-layer networks), training small networks remains a very challenging problem for many years. This problem is so difficult that even partial progress has led to two best paper awards of the top conference ICLR in 2015 \cite{han2015deep}  and 2019 \cite{frankle2019lottery}. Nevertheless, these two works \cite{han2015deep} \cite{frankle2019lottery} used non-conventional methods (pruning-based) to identify good solutions to the optimization problem. It is very interesting to find a way to train small networks from a random initial point. To tackle such a long-standing difficult problem, we believe understanding the possible theoretical issues will be an important first step. 

Note that although training small networks is the motivation, our work is theoretical and does not attempt to immediately provide a better training method for small networks. The slow progress on training small networks suggests that it is very difficult to solve the problem in one shot (like what ResNet did for training deep networks), and it may be better to gradually tackle the problem: first have a rather complete understanding and then try various approaches. Therefore, a fundamental theoretical analysis should be a valuable step along this direction.

\textbf{Implication of ``no bad basin''}.
One might wonder what is the implication of ``no bad basin''. Traditionally, we care about ``no bad local-min'' due to a two-step argument: first, local search algorithms like gradient descent (GD) may be stuck at local minima; second, if every local-min is a global-min, then these algorithms converge to a global-min. Notice that there is a catch in the above argument: it is not easy to rigorously prove the convergence of GD to local minima (it may converge to saddle points or do not converge at all). Nevertheless, it is still meaningful to eliminate bad local minima since they are big concerns in practice. Deep learning engineers hope to have some reassurance from theory, but they also know that it is too much to ask to have a proof for everything. In a word, even though eliminating bad local minima does not resolve all issues, it resolves one issue. Similarly, eliminating bad basins does not resolve all issues, but at least it resolves one big issue.

Many practitioners view ``basins'' as a bigger threat than non-strict local-min, since adding noise or regularizers could escape non-strict local-min but not basins \cite{wang2019learning}. In that sense, our positive result provides a potential foundation for further convergence analysis. Meanwhile, our negative result indicates that researchers should not spend too much time trying to prove global convergence for narrow networks due to the possible existence of bad basins.

In addition, our result points out a possible direction on the research of training small networks. Our result suggests that bad basins might be one challenge. How to utilize this understanding to guide the design of algorithms or networks? This is beyond the scope of this paper, and here we only provide a rough explanation. In deep learning, when researchers design algorithms or heuristics, they often have some theoretical understanding in their mind and then explain the benefit of the heuristics based on their understanding. For instance, some researchers claim that momentum is useful for training since it helps escape local minima, though without any formal argument. Similarly, if the researchers are aware of the possible existence of bad basins, then they may explain how their heuristics help eliminate, avoid or escape bad basins. Note that our current message is just the existence or non-existence of bad basins; if in the future we could provide more detailed descriptions of the basins (e.g. distribution on the loss surface, the shapes, etc.), then researchers would have more information that helps their design of heuristics or algorithms.

In summary, our work has the following implications. First, it resolves one concern of bad basins when using wide networks. Second, it suggests that further convergence analysis can be done for any wide network, no matter what activation functions, but currently we do not recommend analyzing the convergence for narrow networks. Third, it suggests that researchers interested in training small networks shall pay attention to bad basins, and they could design better methods in order to eliminate/avoid/escape bad basins. 

\section{Formal Proof of Theorem \ref{thm::continuous-pyramid}}\label{sec::pyramid-proof}

The proof of Theorem \ref{thm::continuous-pyramid} is an extension of the proof for  Theorem \ref{thm::continuous}. They are similar in most aspects, and thus we present the proof in the appendix; nevertheless, this proof requires some extra techniques.

Similar to that of Theorem \ref{thm::continuous}, the proof of Theorem \ref{thm::continuous-pyramid} consists of three steps.

\textbf{Step 1}: Prove the result for networks with specific activation functions.

We show that if the activation functions of the network satisfy some specific conditions, the resulting empirical loss function is weakly global. Assumption \ref{ass::pyramid-overall} requires the network to have a wide hidden layer, i.e., the $h_0$-th hidden layer. We first specify a class of activation functions for the ``pyramid'' structure, i.e., for layers $h_0+1 \leq h \leq H$
\begin{assumption}
\label{ass::pyramid-special-neuron}
The activation function $\sigma_h$ is continuous, strictly increasing or strictly decreasing, and its range is $\mathbb{R}$, i.e., $\sigma(\mathbb{R}) = \mathbb{R}$.  
\end{assumption}
Clearly, the activation functions satisfying Assumption \ref{ass::pyramid-special-neuron} also satisfies Assumption \ref{ass::pyramid-neuron}. We have the following theorem.

\begin{theorem}\label{thm::deep-pyramid-special}
	Given a fully connected neural network with $H$ hidden layers., activation function $\sigma_h$ for each hidden layer, and empirical loss function $E(W)=l(Y, W_{H+1}T_H)$. Suppose that Assumption \ref{ass::overall} and \ref{ass::pyramid-overall} hold, and 
	\begin{enumerate}
		\item For any $1\leq h \leq h_0$, $\sigma_h$ satisfies Assumption \ref{ass::activation};
		\item For any $h_0 +1 \leq  h \leq H$, $\sigma_h$ satisfies Assumption \ref{ass::pyramid-special-neuron}.
	\end{enumerate} 
	Then, $E(W)$ is a weakly global function.
\end{theorem}

\begin{proof}
First, we show that for any initial weight $W^o$, we can perturb it to a point $W^p$, and starting from the perturbed point $W^p$, there exists a strictly decreasing path towards the global infimum.

Let $\Theta$ denote the space of $W$ and let $W_{1:h} = (W_1, W_2, \cdots, W_h)$ denote the weights for the first $h$ hidden layers. We consider the following set:
\begin{equation}
	\Omega = \{W | \mathrm{rank}(T_{h_0}) = N, \quad \!\! \mathrm{rank}(W_h) = d_h, \quad \!\! h_0+1 \leq h \leq H+1 \}.
\end{equation}
Since $\sigma_h$ satisfies Assumption \ref{ass::activation} for any $1 \leq h \leq h_0$, according to Proposition \ref{extended prop 2}, all $W_{1:h_0}$'s that entail a non-full-rank $T_{h_0}$ only constitute a zero-measure set. That is, all $W_{1:h_0}$'s such that $T_{h_0}$ is rank-$N$ constitute a dense set. Further, note that full-rank matrices are dense. Since $d_h \leq d_{h-1}$ for all $h_0 < h \leq H+1$, $\{W_{h}|\mathrm{rank}(W_h) = d_h\}$ is dense in $\mathbb{R}^{d_h \times d_{h-1}}$. From the above analysis, we conclude that $\Omega$ is dense in $\Theta$. Therefore, for any initial weight $W^o$ and an arbitrarily small $\delta>0$, there exists $W^p = (W^p_1, W^p_2, \cdots, W^p_{H+1}) \in B(W^o, \delta)$ such that $W^p \in \Omega$.
	
In what follows, we show that if $E(W^p)> \inf_{W}(E(W))$, there exists a continuous path starting from $W^p$, and the empirical loss along which strictly decreases to $\inf_{W} E(W)$. 

\textbf{Case 1}: The global infimum of $l(Y, \cdot)$ is achievable. That is, there exists $\hat{Y}^*\in\mathbb{R}^{d_{H+1}\times N}$ such that $l(Y,\hat{Y}^*) = \inf_{\hat{Y}} l(Y, \hat{Y})$. 

Denote $T^p_h$ as the output of the $h$-th hidden layer at weight $W^p$, and $\hat{Y}^p = W^p_{H+1}T^p_{H}$ as the network output at weight $W^p$. Since the loss function $l(Y, \hat{Y})$ is convex with respect to $\hat{Y}$ (in the space $\mathbb{R}^{d_{H+1}\times N}$), it is also continuous. Suppose that $W^p$ is not a global minimum of $E(W)$, we have
\begin{equation}
l(Y, \hat{Y}^p)= E(W^p) > \inf_W E(W)\geq \inf_{\hat{Y}} l(Y, \hat{Y}).
\end{equation}
Then, there exists a continuous path $\hat{Y}(\lambda): [0,1] \rightarrow \mathbb{R}$ such that $\hat{Y}(0) = \hat{Y}^p$, $l(Y, \hat{Y}(1)) = \inf_{\hat{Y}} l(Y, \hat{Y})$, and $l(Y, Y(\lambda))$ is strictly decreasing with respect to $\lambda$, i.e.,
\begin{equation}
l(Y, Y(\lambda_1)) > l(Y, Y(\lambda_2)), \quad \!\! \forall \lambda_1 < \lambda_2, \quad \!\! \lambda_1, \lambda_2 \in [0, 1].
\end{equation}

By Assumption \ref{ass::pyramid-special-neuron}, for each $h_0+1 \leq h \leq H$, $\sigma_h$ has a continuous inverse $\sigma^{-1}_h: \mathbb{R} \rightarrow \mathbb{R}$. Also, since $W^p_h$ is of full row rank for each $h_0+1 \leq h \leq H$, it has a right inverse $(W^p_h)^\dagger$ such that $W^p_h(W^p_h)^\dagger = \mathbf{I}$. Further, as $T^p_{h_0}$ has full column rank, it has a left inverse $\left(T^p_{h_0}\right)^\dagger$ such that $\left(T^p_{h_0}\right)^\dagger \left(T^p_{h_0}\right) = \mathbf{I}$. We construct a path $W(\lambda): [0,1]\rightarrow \Theta$ as follows
\begin{equation}
W(\lambda) = (W^p_1, W^p_2, \cdots, W^p_{h_0}, W_{h_0+1}(\lambda), W^p_{h_0+2}, \cdots, W^p_{H+1})
\end{equation}
where $W_h(\lambda)$ is defined recursively as follows
\begin{subequations}
\label{eq::pyramid-path-construct}
\begin{align}
W_{h_0+1}(\lambda) &= \left[\sigma^{-1}_{h_0+1}\left(T_{h_0+1}(\lambda)\right) - \sigma_{h_0+1}^{-1}\left(T^p_{h_0+1}\right)\right]\left(T^p_{h_0}\right)^\dagger + W_{h_0+1}^p \\
T_h(\lambda) &= \left(W^p_{h+1}\right)^\dagger 
\left( 
\sigma^{-1}_{h+1}\left(T_{h+1}(\lambda)\right) - \sigma^{-1}_{h+1}\left( T^{p}_{h+1} \right) \right) + T^{p}_{h}, \nonumber \\
& \quad\quad   \quad \quad\quad \quad \quad \quad \quad \quad\quad \quad \quad\quad 
h = h_0+1, h_0+2, \cdots, H-1\\
T_H(\lambda) &= \left(W^p_{H+1}\right)^\dagger 
\left(\hat{Y}(\lambda) - \hat{Y}^p \right) + T^{p}_H.
\end{align}
\end{subequations}
Note that along the constructed path $W(\lambda)$ we only change the weight to the $h_0+1$ hidden layers. Next, we verify the following three facts.
\begin{enumerate}[label=(\arabic*)]
	\item $W(\lambda)$ is a continuous path. 
	\par In fact, since $\hat{Y}(\lambda)$ is continuous, $\sigma^{-1}_h$ is a continuous function for all $h_0+1 \leq h \leq H$, therefore each $T_h(\lambda)$ is continuous with respect $\lambda$ for all $h_0+1 \leq h \leq H$. Thus, $W(\lambda)$ is also continuous with respect to $\lambda$.
	\item $W(0) = W^p$, and $W(1)$ is a global minimum of $E(W)$.
	\par Note that $\hat{Y}(0) = \hat{Y}^p$. From \eqref{eq::pyramid-path-construct}, we have
	\begin{subequations}
		\begin{align}
			T_H(0) &= \left(W^p_{H+1}\right)^\dagger \mathbf{0} + T^p_H  = T^p_H\\
			T_h(0) &= \left(W^p_{h+1}\right)^\dagger \mathbf{0} + T^p_h = T^p_h, \quad \!\! h = h_0+1, h_0+2, \cdots, H \\
			W_{h_0+1}(0) &= \mathbf{0} \left(T^p_{h_0}\right)^\dagger + W^p_{h_0+1} = W^p_{h_0+1}.
		\end{align}
	\end{subequations}
Notice that $W(\lambda)$ is identical to $W^p$ except the weights to the $(h_0+1)$-th layer. Thus, $W(0) = W^p$.
Now consider the output of each hidden layer at weight $W(\lambda)$, denoted by $T^{\lambda}_h$. For $W(\lambda)$ and $W^p$, the weights to the first $h_0$ hidden layers are the same, and hence $T^{\lambda}_{h_0} = T^p_{h_0}$. From \eqref{eq::pyramid-path-construct}, we have
	\begin{subequations}
		\begin{align}
		T^{\lambda}_{h_0+1} =& \sigma_{h_0+1}\left(W_{h_0+1}(\lambda)T^{\lambda}_{h_0}\right) = \sigma_{h_0+1}\left(W_{h_0+1}(\lambda)T^p_{h_0}\right) \\
		=& \sigma_{h_0+1}\left(\sigma^{-1}_{h_0+1}\left(T_{h_0+1}(\lambda)\right) - \sigma_{h_0+1}^{-1}\left(T^p_{h_0+1}\right) + W^p_{h_0+1}T^p_{h_0}\right) 
		\\
		=&\sigma_{h_0+1}\Big(\sigma^{-1}_{h_0+1}\left(T_{h_0+1}(\lambda)\right) - \sigma_{h_0+1}^{-1}\left(\sigma_{h_0+1}\left(W^p_{h_0+1}T^p_{h_0}\right)\right) \nonumber \\
		&+ W^p_{h_0+1}T^p_{h_0}\Big) \\
		=&\sigma_{h_0+1}\left(\sigma^{-1}_{h_0+1}\left(T_{h_0+1}(\lambda)\right)\right) \\
		=&T_{h_0+1}(\lambda)
		\end{align}
	\end{subequations}
	For $h_0+1 < h \leq H$, if $T^{\lambda}_{h-1} = T_{h-1}(\lambda)$, we have
	\begin{subequations}
		\begin{align}
		T^{\lambda}_{h} &= \sigma_{h}\left(W^p_h T^{\lambda}_{h-1}\right) = \sigma_{h}\left(W^p_h T_{h-1}(\lambda)\right) \\
		&= \sigma_h\left(W^p_{h}\left(W^p_{h}\right)^\dagger 
		\left( 
		\sigma^{-1}_{h}\left(T_{h}(\lambda)\right) - \sigma^{-1}_{h}\left( T^{p}_{h} \right) \right) + W^p_{h}T^{p}_{h-1}\right)
		\\
		&=\sigma_{h}\left(\sigma^{-1}_h\left(T_{h}(\lambda)\right) - \sigma_{h}^{-1}\left(\sigma_h\left(W^p_{h}T^p_{h-1}\right)\right) + W^p_h T^p_{h-1}\right) \\
		&=\sigma_{h}\left(\sigma^{-1}_{h}\left(T_{h}(\lambda)\right)\right) \\
		&=T_{h}(\lambda).
		\end{align}
	\end{subequations}
	And similarly, for the network output at $W(\lambda)$, denoted by $\hat{Y}^\lambda$ we have
	\begin{subequations}
		\begin{align}
		\hat{Y}^\lambda &= W^p_{H+1}T^{\lambda}_{H} =W^p_{H+1} T_H(\lambda)\\
		&= W^p_{H+1} \left(W^p_{H+1}\right)^\dagger 
		\left(\hat{Y}(\lambda) - \hat{Y}^p \right) + W^p_{H+1}T^{p}_H\\
		&= \hat{Y}(\lambda) - \hat{Y}^p + W^p_{H+1}T^{p}_H \\
		&= \hat{Y}(\lambda).
		\end{align}
	\end{subequations}
 	Then, the empirical loss	
 	\begin{equation}
 		E(W(1)) = l(Y, \hat{Y}^1) = \inf_{\hat{Y}}l(Y, \hat{Y}) \leq \inf_W E(W).
 	\end{equation}
 	Therefore we must have $E(W(1)) = \inf_W E(W)$. That is, $E(1)$ is a global minimum of $E(W)$.
	\item $E(W(\lambda))$ is strictly decreasing with respect to $\lambda$.
	\par From (2), we have
	\begin{equation}
		E(W(\lambda)) = l(Y, \hat{Y}^{\lambda}) = l(Y, \hat{Y}(\lambda)).
	\end{equation}
	Then for any $\lambda_1, \lambda_2 \in [0,1]$ and $\lambda_1< \lambda_2$, we have
	\begin{equation}
		E(W(\lambda_1)) = l(Y, \hat{Y}(\lambda_1)) > l(Y, \hat{Y}(\lambda_2)) = E(W(\lambda_2)).
	\end{equation}
	$E(W(\lambda))$ is strictly decreasing with respect to $\lambda$.
\end{enumerate}
We conclude that $W(\lambda)$ starts from $W^p$ and is a strictly decreasing path towards the global infimum of $E(W)$.

\textbf{Case 2}: There does not exist $\hat{Y}^*\in\mathbb{R}^{d_{H+1}\times N}$ such that $l(Y,\hat{Y}^*) = \inf_{\hat{Y}} l(Y, \hat{Y})$. 

Similar to Case 1, there exists a continuous path $\hat{Y}(\lambda): [0,1) \rightarrow \mathbb{R}$ such that $\hat{Y}(0) = \hat{Y}^p$, $\lim_{\lambda \rightarrow 1} l(Y, \hat{Y}(\lambda)) = \inf_{\hat{Y}}l(Y, \hat{Y})$, and $l(Y, Y(\lambda))$ is strictly decreasing with respect to $\lambda$. We can then construct a continuous path $W(\lambda): [0,1) \rightarrow \Theta$, such that $E(W(\lambda))$ is strictly decreasing, and $\lim_{\lambda \rightarrow 1}E(W(\lambda)) = \inf_{W}E(W)$. Since the construction and analysis are identical to that in Case 1, we omit the details here.

Then we prove by contraposition that $E(W)$ is a weakly global function. Assume in contrast that there exists a sub-optimal set-wise strict local minimum of $E(W)$, denoted by $\mathcal{W}$. Note by Definition \ref{def_2}, $\mathcal{W}$ is a compact set. Let $\mathcal{W}_\delta=\{W'\mid \inf_{W\in\mathcal{W}}\|W'-W\|_2\leq\delta \}$, then there exists $\delta>0$ such that for all $W\in\mathcal{W}$ and $W'\in\mathcal{W}_\delta\setminus \mathcal{W}$, $E(W)<E(W')$. Define $\partial \mathcal{W}_\delta$ as the boundary of $\mathcal{W}_\delta$. Note that both $\mathcal{W}_\delta$ and $\partial\mathcal{W}_\delta$ are closed, there exists $W^* \in \partial \mathcal{W}_\delta$ such that $E(W^*) = \inf_{W'\in\partial \mathcal{W}_\delta}E(W')$. Moreover, $E(W^*)=\sup_{W\in\mathcal{W}}E(W)+\varepsilon$ for some $\varepsilon>0$. 
	
Consider an arbitrary point $W^o\in\mathcal{W}$. Since $E(W)$ is a continuous function, there exists $\delta>\delta_0>0$ such that for any $W'\in B(W^o, \delta_0)$, $|E(W')-E(W^o)|<\varepsilon/2$. From the above analysis, we can find $W^p \in B(W^o, \delta_0)$ such that there exists a strictly decreasing path from $W^p$ to $\inf_W E(W)$ . Since $\mathcal{W}$ is a bad local minimum, $\inf_{W\in \mathcal{W}_{\delta}} E(W) > \inf_{W} E(W)$. Therefore, the above strictly decreasing path starting from $W^p$ must pass through the boundary $\partial \mathcal{W}_\delta$. However, $E(W^p)<E(W^o)+\varepsilon/2< \sup_{W\in\mathcal{W}}E(W)+\varepsilon=E(W^*)=\inf_{W'\in\partial \mathcal{W}_\delta}E(W')$. This implies that the considered path can never be strictly decreasing, leading to a contradiction. Therefore, we conclude that there is no sub-optimal set-wise strict local minima, and therefore $E(W)$ is a weakly global function.
\end{proof}

Theorem \ref{thm::deep-pyramid-special} identifies a special class of deep over-parameterized networks with a pyramid structure, whose empirical loss function is weakly global.

\textbf{Step 2}: Show that for each hidden layer, the activation in Theorem \ref{thm::deep-pyramid-special} can approximate the activation function in Theorem \ref{thm::continuous-pyramid}. 

We use the same approximation trick as in the proof of Theorem \ref{thm::continuous}. For the activation functions satisfying Assumption \ref{ass::activation}, we have Lemma \ref{lemma::continuous_activation}. Regarding the activation functions satisfying Assumption $\ref{ass::pyramid-special-neuron}$, we have the following lemma.

\begin{lemma}\label{lemma::continuous_activation_pyramid}
For any continuous and non-increasing (or non-decreasing) function $f: \mathbb{R} \rightarrow \mathbb{R}$, there exists a sequence of functions $(f_k)_{k\in\mathbb{N}}$, all continuous and satisfying Assumption \ref{ass::pyramid-special-neuron}, such that $f_k$ converges to $f$ compactly. 
\end{lemma}

\begin{proof}
Consider an arbitrary continuous and non-increasing function $f: \mathbb{R} \rightarrow \mathbb{R}$. Note that the non-decreasing case can be proved by following the same idea, and we omit the details therein.

We construct a sequence of functions $(f_k)_{k\in \mathbb{N}}$ as
\begin{equation}
	\label{eq::pyramid-neuron-construct}
	f_k(x) = \begin{cases}
		- x - k - 1 + f(-k-1) & x < -k-1 \\ 
		f(x) - \frac{x+k+1}{(k+1)^2} & -k-1 \leq x \leq k+1 \\ 
		- x + k + 1 + f(k+1) - \frac{2}{k+1} & x > k+1
	\end{cases}
\end{equation}

First, we show that $f_k$ is continuous. To this end, we only need to verify that $f_k$ is left-continuous at $x = -k-1$ and right continuous at $x = k+1$. From \eqref{eq::pyramid-neuron-construct}, we have
\begin{subequations}
	\begin{align}
		\lim_{x\rightarrow (-k-1)^-} f_k(x)&= f(-k-1) = f_k(-k-1)\\
		\lim_{x\rightarrow (k+1)^+}f_k(x) &= f(k+1) - \frac{2}{k+1} = f_k(k+1)
	\end{align}
\end{subequations}
Thus, $f_k(x)$ is continuous for all $k \in \mathbb{N}$.

Second, we show that $f_k$ is strictly decreasing and $f(\mathbb{R}) = \mathbb{R}$. On both $(-\infty, -k-1)$ and $(k+1, +\infty)$,  $f_k$ is linear with negative slope, and hence strictly decreasing. Noting that $f_k$ is non-increasing, for any $x_1, x_2 \in [-k-1, k+1]$ and $x_1< x_2$, we have
\begin{equation}
	f_k(x_1) = f(x_1) - \frac{x_1 + k +1}{(k+1)^2} \geq f(x_2) - \frac{x_1 + k +1}{(k+1)^2} > f(x_2) - \frac{x_2 + k +1}{(k+1)^2} = f_k(x_2).
\end{equation}
Therefore $f_k$ is strictly decreasing on $[-k-1,k+1]$. As $f_k$ is continuous, we conclude that $f_k$ is strictly decreasing on $\mathbb{R}$. Further, from \eqref{eq::pyramid-neuron-construct} we have $\lim_{x\rightarrow -\infty}f_k(x) = +\infty$ and $\lim_{x\rightarrow +\infty}f_k(x) = -\infty$. Again, from the fact that $f_k$ is continuous, we have $f_k(\mathbb{R}) = \mathbb{R}$.

Finally, we show that $f_k$ converges to $f$ compactly. Consider an arbitrary compact set $S \subset \mathbb{R}$ and $\epsilon > 0$. Since $S$ is bounded, there exists $K\in \mathbb{N}$ such that $S \subset [-K-1, K+1]$. For any $k > \max\{K, \epsilon/2 - 1\}$ and $x \in S$, we have $x \in  [-k-1, k+1]$, and
\begin{equation}
	|f_k(x) - f(x)| = \left|f(x) - \frac{x+k+1}{(k+1)^2}-f(x)\right| = \frac{|x+k+1|}{(k+1)^2} \leq \frac{2}{k+1} < \epsilon. 
\end{equation}
Thus, $f_k$ converges to $f$ uniformly on $S$. As $S$ is an arbitrary compact set on $\mathbb{R}$, $f_k$ converges to $f$ compactly on $\mathbb{R}$.

We complete the proof.
\end{proof}

\textbf{Step 3}: Under compact convergence, show that the neural network considered in Theorem \ref{thm::continuous-pyramid} is weakly global.

We denote the considered network by $\mathcal{N}$. From Lemma \ref{lemma::continuous_activation}, for any $1\leq h \leq h_0$, there exists a sequence of activation functions $(\sigma_{h,k})_{k\in\mathbb{N}}$, each satisfying Assumption \ref{ass::activation}, that uniformly converges to $\sigma_h$. 
From Lemma \ref{lemma::continuous_activation_pyramid}, for any $h_0 +1 \leq h \leq H$, there exists a sequence of activation functions $(\sigma_{h,k})_{k\in\mathbb{N}}$, each satisfying Assumption \ref{ass::pyramid-special-neuron}, that compactly converges to $\sigma_h$. Since uniform convergence implies compact convergence, for all $1 \leq h \leq H$, $\sigma_{h,k}$ compactly converges to $\sigma_h$.

In the following, we show that the empirical loss of $\mathcal{N}$ can be approximated by a sequence of weakly global functions, which is identical to the analysis in the proof of Theorem \ref{thm::continuous}

For each $k \in \mathbb{N}$, we construct a neural network, denoted by $\mathcal{N}_k$, by replacing the activation function of the $h$-th hidden layer with $\sigma_{h,k}$, $h = 1, 2, \cdots, H$. For all $\mathcal{N}_k$, we assume the training dataset to be identical to that of $\mathcal{N}$. We also denote the output matrix of the $h$-th hidden layer by $T^{(k)}_h$ and the empirical loss by
\begin{equation}
E_k(W) = l\left(Y, W_{H+1}T^{(k)}_H\right).
\end{equation}
From Theorem \ref{thm::deep-pyramid-special}, $E_k$ is a weakly global function with respect to $W$, $\forall k\in \mathbb{N}$.

Consider an arbitrary compact subset $S$ in the space of $W$. For any $W\in S$, define $t^{(k)}_{h,i,n}(W) = (T^{(k)}_h)_{i,n}$ and $t_{h,i,n}(W) = (T_h)_{i,n}$ for any $k \in \mathbb{N}$, $1\leq h \leq H$, $1 \leq i \leq d_h$, and $1 \leq n \leq N$. That is, we rewrite the output of each neuron in the hidden layers as a function of $W$. We prove by induction that every sequence $(t^{(k)}_{h,i,n})_{k\in \mathbb{N}}$ converges to $t_{h,i,n}$ uniformly on $S$.

For $h =1$, we have
\begin{align}
t^{(k)}_{1,i,n}(W) &= \sigma_{1,k}\left(\sum^{d_0}_{j=1}(W_1)_{i,j}X_{j,n}\right) \\
t_{1,i,n}(W) &= \sigma_1\left(\sum^{d_0}_{j=1}(W_1)_{i,j}X_{j,n}\right).
\end{align}
Since $\sigma_{1,k}$ compactly converges to $\sigma_1$, $t^{(k)}_{1,j,n}$ uniformly converges to $t_{1,j,n}$ on $S$ for all $1\leq j\leq d_1$, $1 \leq n \leq N$.

For $h>1$, assume that $t^{(k)}_{h-1,i,n}$ uniformly converges to $t_{h-1,i,n}$ on $S$ for all $1\leq i\leq d_{h-1}$, $1 \leq n \leq N$. For the $h$-th layer, we have
\begin{align}
t^{(k)}_{h,i,n}(W) &=  \sigma_{h,k}\left(\sum^{d_{h-1}}_{j=1}(W_h)_{i,j}\left(T^{(k)}_{h-1}\right)_{j,n}\right)\nonumber\\ 
&=\sigma_{h,k}\left(\sum^{d_{h-1}}_{j=1}(W_h)_{i,j} t^{(k)}_{h-1,j,n}(W) \right) \\
t_{h,i,n}(W) &= \sigma_h\left(\sum^{d_{h-1}}_{j=1}(W_h)_{i,j}(T_{h-1})_{j,n}\right)\nonumber\\
&=\sigma_h\left(\sum^{d_{h-1}}_{j=1}(W_h)_{i,j} t_{h-1,j,n}(W) \right).
\end{align}
By the induction hypothesis, it is easy to show that $\sum^{d_{h-1}}_{j=1}(W_h)_{i,j} t^{(k)}_{h-1,j,n}(W)$ uniformly converges to $\sum^{d_{h-1}}_{j=1}(W_h)_{i,j} t_{h-1,j,n}(W)$ on $S$. Note that $(\sigma_{h,k})_{k\in\mathbb{N}}$ converges to $\sigma_h$ compactly. It directly follows from Lemma \ref{lemma::nesting} that $t^{(k)}_{h,i,n}(W)$ converges to $t_{h,i,n}(W)$.

Therefore, we conclude that $t^{(k)}_{h,i,n}$ converges to $t_{h,i,n}$ uniformly on $S$ for every $1\leq h \leq H$, $1 \leq i \leq d_h$, and $1 \leq n \leq N$.

Now we consider the empirical loss
\begin{align}
E_k(W) &= l\left(Y, W_{H+1}T^{(k)}_H\right)\\
E(W) &= l\left(Y, W_{H+1}T_H\right).
\end{align}
As every component of $T^{(k)}_H$ converges uniformly to the corresponding component of $T_H$ on $S$, it can be shown that $W_{H+1}T^{(k)}_H$ converges uniformly to $W_{H+1}T_H$ on $S$. By Lemma \ref{lemma::nesting}, where we set both $g_k$ and $g$ to the loss function $l$, we have that $E_k$ uniformly converges to $E$ on $S$. Noting that $S$ is an arbitrary compact subset in the space of $W$, the empirical loss $E_k$ converges to $E$ compactly on the space of $W$. Since $E_k(W)$ is a weakly global function for every $k\in \mathbb{N}$, by Proposition \ref{prop::compact_convergence}, $E(W)$ is also a weakly global function. We complete the proof.

\end{document}